\theoremstyle{plain}
\newtheorem{theorem}{Theorem}[section]
\theoremstyle{definition}
\theoremstyle{remark}
\newcommand{\bb}{\boldsymbol{b}}
\newcommand{\bm}{\boldsymbol{m}}
\newcommand{\bp}{\boldsymbol{p}}
\newcommand{\bg}{\boldsymbol{g}}
\newcommand{\bx}{\boldsymbol{x}}
\newcommand{\by}{\boldsymbol{y}}
\newcommand{\bv}{\boldsymbol{v}}
\newcommand{\boldeta}{\boldsymbol{\eta}}
\newcommand{\argmin}{\mathop{\mathrm{argmin}}}
\newcommand{\field}[1]{\mathbb{#1}}
\newcommand{\R}{\field{R}}
\title{Understanding AdamW through Proximal Methods and Scale-Freeness}
\author{
Zhenxun Zhuang\\ Boston University \\ {zxzhuang@bu.edu} \and
Mingrui Liu\\ George Mason University \\
{mingruil@gmu.edu} \and
Ashok Cutkosky \\ Boston University \\ {ashok@cutkosky.com} \and
Francesco Orabona \\ Boston University \\
{francesco@orabona.com}
}
\date{}
\begin{document}

\maketitle

\begin{abstract}
Adam has been widely adopted for training deep neural networks due to less hyperparameter tuning and remarkable performance. To improve generalization, Adam is typically used in tandem with a squared $\ell_2$ regularizer (referred to as Adam-$\ell_2$). However, even better performance can be obtained with AdamW, which decouples the gradient of the regularizer from the update rule of Adam-$\ell_2$. Yet, we are still lacking a complete explanation of the advantages of AdamW. In this paper, we tackle this question from both an \emph{optimization} and an \emph{empirical} point of view. First, we show how to re-interpret AdamW as an approximation of a proximal gradient method, which takes advantage of the closed-form proximal mapping of the regularizer instead of only utilizing its gradient information as in Adam-$\ell_2$. Next, we consider the property of ``scale-freeness'' enjoyed by AdamW and by its proximal counterpart: their updates are invariant to component-wise rescaling of the gradients. We provide empirical evidence across a wide range of deep learning experiments showing a correlation between the problems in which AdamW exhibits an advantage over Adam-$\ell_2$ and the degree to which we expect the gradients of the network to exhibit multiple scales, thus motivating the hypothesis that the advantage of AdamW could be due to the scale-free updates.
\end{abstract}

\section{Introduction}
\label{sec:intro}

Recent years have seen a surge of interest in applying deep neural networks~\citep{LeCunBH15} to a myriad of areas. While Stochastic Gradient Descent (SGD)~\citep{RobbinsM51} remains the dominant method for optimizing such models, its performance depends crucially on the step size hyperparameter.
To alleviate this problem, there has been a fruitful amount of research on adaptive gradient methods~\citep[e.g.][]{DuchiHS10, McMahanS10, TielemanH12, Zeiler12, LuoXLS18, ZhouTYCG18}. These methods provide mechanisms to automatically set stepsizes and have been shown to greatly reduce the tuning effort while maintaining good performance. Among those adaptive algorithms, one of the most widely used is Adam~\citep{KingmaB15} which achieves good results across a variety of problems even by simply adopting the default hyperparameter setting.

In practice, to improve the generalization ability, Adam is typically combined with a squared $\ell_2$ regularization, which we will call Adam-$\ell_2$ hereafter.
Yet, as pointed out by~\citet{LoshchilovH18}, the gradient of the regularizer does not interact properly with the Adam update rule. To address this, they provide a method called AdamW that decouples the gradient of the $\ell_2$ regularization from the update of Adam. The two algorithms are shown in Algorithm~\ref{algo:adamw}. Although AdamW is very popular~\citep{KuenPLZT19, LifchitzAPB19, CarionMSUKZ20} and it frequently outperforms Adam-$\ell_2$, it is currently unclear why it works so well.
Recently, however, \citet{BjorckWG20} applied AdamW in Natural Language Processing and Reinforcement Learning problems and found no improvement of performance over sufficiently tuned Adam-$\ell_2$.

In this paper, we focus on understanding how the AdamW update differs from Adam-$\ell_2$ from an optimization point of view.
First, we unveil the surprising connection between AdamW and \emph{proximal updates}~\citep{ParikhB14}. In particular, we show that AdamW is an approximation of the latter and confirm such similarity with an empirical study. Moreover, noticing that AdamW and the proximal update are both \emph{scale-free} while Adam-$\ell_2$ is not, we also derive a theorem showing that scale-free optimizers enjoy an automatic acceleration w.r.t.~the condition number on certain cases. This gives AdamW a concrete theoretical advantage in training over Adam-$\ell_2$.

Next, we empirically identify the scenario of training very deep neural networks with Batch Normalization switched off as a case in which AdamW substantially outperforms Adam-$\ell_2$ in both testing and training. In such settings, we observe that the magnitudes of the coordinates of the updates during training are much more concentrated about a fixed value for AdamW than for Adam-$\ell_2$, which is an expected property of scale-free algorithms. Further, as depth increases, we expect a greater diversity of gradient scalings, a scenario that should favor scale-free updates. Our experiments support this hypothesis: deeper networks have more dramatic differences between the distributions of update scales between Adam-$\ell_2$ and AdamW and exhibit larger accuracy advantages for AdamW.

To summarize, the contributions of this paper are:
\begin{enumerate}[topsep=1pt, itemsep=0pt]
\item We show that AdamW can be seen as an approximation of a proximal update, which utilizes the entire regularizer rather than only its gradient.
\item We point out the scale-freeness property enjoyed by AdamW and show the advantage of such a property on a class of functions.
\item We find a scenario where AdamW is significantly better than Adam-$\ell_2$ in both training and testing performance and report an empirical observation of the correlation between such advantage and the scale-freeness property of AdamW.
\end{enumerate}

The rest of this paper is organized as follows: In Section~\ref{sec:related} we discuss the relevant literature. The connection between AdamW and the proximal updates as well as its scale-freeness are explained in Section~\ref{sec:theory}. We then report the empirical observations in Section~\ref{sec:exp}. Finally, we conclude with a discussion of the results, some limitations of this work, and future directions.

\section{Related Work}
\label{sec:related}

\textbf{Weight decay}
By biasing the optimization towards solutions with small norms, weight decay has long been a standard technique to improve the generalization ability in machine learning~\citep{KroghH91, BosC96} and is still widely employed in training modern deep neural networks~\citep{DevlinCLT19, TanL19}. Note that here we do not attempt to explain the generalization ability of weight decay or AdamW. Rather, we assume that the regularization and the topology of the network guarantee good generalization performance and study training algorithms from an optimization point of view. In this view, we are not aware of other work on the influence of regularization on the optimization process.

\noindent\textbf{Proximal updates}
The use of proximal updates in the batch optimization literature dates back at least to 1965~\citep{Moreau65,Martinet70,Rockafellar76,ParikhB14} and they are used more recently even in the stochastic setting~\citep{ToulisA17,AsiD19}. We are not aware of any previous paper pointing out the connection between AdamW and proximal updates.

\noindent\textbf{Scale-free algorithms}
The scale-free property was first proposed in the online learning field~\citep{Cesa-BianchiMS07,OrabonaP18}. There, they do not need to know a priori the Lipschitz constant of the functions, while obtaining optimal convergence rates. To the best of our knowledge, the connection between scale-freeness and the condition number we explain in Section~\ref{sec:theory} is novel, as is the empirical correlation between scale-freeness and good performance.

\noindent\textbf{Removing Batch Normalization (BN)}
The setting of removing BN is not our invention: indeed, there is already active research in this~\citep{DeS20, ZhangDM19}. The reason is that BN has many disadvantages~\citep{BrockDSS21} including added memory overhead~\citep{BuloPK18} and training time~\citep{GitmanG17}, and a discrepancy between training and inferencing~\citep{SinghS19}. BN has also been found to be unsuitable for many cases including 
sequential modeling tasks~\citep{BaKH16} and contrastive learning algorithms~\citep{ChenKNH20}. Also, there are SOTA architectures that do not use BN including the Vision transformer~\citep{DosovitskiyB0WZ21} and the BERT model~\citep{DevlinCLT19}.

\section{Theoretical Insights on Merits of AdamW}
\label{sec:theory}

\begin{algorithm}[t]
\caption{{\small{\colorbox{red}{Adam with $\ell_2$ regularization (Adam-$\ell_2$)} and \colorbox{green}{AdamW} \cite{LoshchilovH17}}}.\\
\emph{All operations on vectors are element-wise.}}
\label{algo:adamw}
\begin{algorithmic}[1]
\STATE{\textbf{Given} $\alpha$, $\beta_1$, $\beta_2$, $\epsilon$, $\lambda \in \R$, lr schedule $\{\eta_t\}_{t\ge0}$.}
\STATE{\textbf{Initialize:} $\bx_0\in\R^d$, $\bm_0\leftarrow 0$, $\bv_0\leftarrow 0$}
\FOR{$t = 1,2,\ldots,T$}
\STATE{Compute the stochastic gradient $\nabla f_t(\bx_{t-1})$}
\STATE{$\bg_t \leftarrow \nabla f_t(\bx_{t-1})$\colorbox{red}{$+\lambda\bx_{t-1}$}}
\STATE{$\bm_t \leftarrow \beta_1 \bm_{t-1} + (1-\beta_1) \bg_t$, \quad$\bv_t \leftarrow \beta_2 \bv_{t-1} + (1-\beta_2) \bg_t^2$}
\STATE{$\hat{\bm}_t \leftarrow \bm_t/(1-\beta_1^t)$, \quad$\hat{\bv}_t \leftarrow \bv_t/(1-\beta_2^t)$}
\STATE{$\bx_{t} \leftarrow \bx_{t-1}$ \colorbox{green}{$-\eta_t\lambda\bx_{t-1}$} $- \eta_t\alpha\hat{\bm}_t/(\sqrt{\hat{\bv}_t} + \epsilon)$}
\ENDFOR
\end{algorithmic}
\end{algorithm}

\noindent\textbf{AdamW and Proximal Updates} 
Here, we show that AdamW approximates a proximal update.

Consider that we want to minimize the objective function 
\begin{equation}
\label{eq:composite}
F(\bx) = \tfrac{\lambda}{2} \|\bx\|_2^2 + f(\bx),
\end{equation}
where $\lambda>0$ and $f(\bx):\R^d\rightarrow \R$ is a function bounded from below.
We could use a stochastic optimization algorithm that updates in the following fashion
\begin{equation}
\label{eq:sgd}
\bx_{t} = \bx_{t-1} - M_t\bp_t,
\end{equation}
where $M_t$ is a generic matrix containing the learning rates and $\bp_t$ denotes the update direction. Specifically, we consider $M_t=\eta_tI_d$ where $\eta_t$ is a learning rate schedule, e.g., the constant one or the cosine annealing~\citep{LoshchilovH17}. This update covers many cases:
\begin{enumerate}[topsep=0pt, parsep=1pt]
\item $\bp_t = \alpha\bg_t$ gives us the vanilla SGD;
\item $\bp_t = \alpha\bg_t/(\sqrt{\sum^{t}_{i=1}\bg_i^2 + \epsilon})$ gives the AdaGrad algorithm~\citep{DuchiHS11};
\item $\bp_t = \alpha\hat{\bm}_t/(\sqrt{\hat{\bv}_t} + \epsilon)$ recovers Adam, Algorithm~\ref{algo:adamw},
\end{enumerate}
where $\alpha$ denotes the initial learning rate. Note that in the above we use $\bg_t$ to denote the stochastic gradient of the entire objective function: $\bg_t=\nabla f_t(\bx_{t-1})+\lambda\bx_{t-1}$ ($\lambda=0$  if the regularizer is not present), with $\hat{\bm}_t$ and $\hat{\bv}_t$ both updated using $\bg_t$.

This update rule~\eqref{eq:sgd} is given by the following online mirror descent update~\citep{NemirovskyY83, WarmuthJ97, BeckT03}:
\begin{equation}
\begin{aligned}
\bx_{t} = \argmin_{\bx \in \R^d} &\tfrac{\lambda}{2}\|\bx_{t-1}\|^2_2 + f(\bx_{t-1}) + \bp_t^\top (\bx-\bx_{t-1})
+ \tfrac{1}{2} (\bx -\bx_{t-1})^\top M_t^{-1} (\bx -\bx_{t-1})~.
\end{aligned}
\end{equation}

This approximates minimizing a first-order Taylor approximation of $F$ centered in $\bx_{t-1}$ plus a term that measures the distance between the $\bx_t$ and $\bx_{t-1}$ according to the matrix $M^{-1}_{t}$. The approximation becomes exact when $\bp_t = \nabla f(\bx_{t-1}) + \lambda\bx_{t-1}$.

Yet, this is not the only way to construct first-order updates for the objective~\eqref{eq:composite}. An alternative route is to linearize only $f$ and to keep the squared $\ell_2$ norm in its functional form:
\begin{equation}
\begin{aligned}
\bx_{t} = \argmin_{\bx \in \R^d} &\tfrac{\lambda}{2}\|\bx\|^2_2 + f(\bx_{t-1}) + \bp_t^\top (\bx-\bx_{t-1})
+
\tfrac{1}{2} (\bx -\bx_{t-1})^\top M_t^{-1} (\bx -\bx_{t-1}),
\end{aligned}
\end{equation}
This update rule uses the \emph{proximal operator}~\citep{Moreau65,ParikhB14} of $\frac{1}{2}\|\cdot\|_2^2$ w.r.t.~the norm $\|\cdot\|_{M_t^{-1}}$. It is intuitive why this would be a better update: \emph{We directly minimize the squared $\ell_2$ norm instead of approximating it.}

From the first-order optimality condition, the update is
\begin{equation}
\label{eq:prox_sgd}
\bx_{t} = (I_d + \lambda M_t)^{-1}(\bx_{t-1} - M_t \bp_t)~.
\end{equation}
When $\lambda=0$, the update in \eqref{eq:sgd} and this one coincide. Yet, when $\lambda\neq0$, they are no longer the same.

We now show how the update in~\eqref{eq:prox_sgd} generalizes the one in AdamW. The update of AdamW is
\begin{equation}
\label{eq:adamw}
\bx_{t} = (1-\lambda\eta_t)\bx_{t-1} - \eta_t\alpha\hat{\bm}_t/(\sqrt{\hat{\bv}_t} + \epsilon)~.
\end{equation}
On the other hand, using $M_t = \eta_tI_d$ and $\bp_t=\alpha\hat{\bm}_t/(\sqrt{\hat{\bv}_t} + \epsilon)$ in~\eqref{eq:prox_sgd} gives:
\begin{equation}
\label{eq:adamprox}
\bx_{t} = (1 + \lambda\eta_t)^{-1}(\bx_{t-1} - \eta_t\alpha\hat{\bm}_t/(\sqrt{\hat{\bv}_t} + \epsilon)),
\end{equation}
Its first-order Taylor approximation around $\eta_t=0$ is
\begin{equation*}
\bx_{t} \approx (1 - \lambda\eta_t)\bx_{t-1} - \eta_t\alpha\hat{\bm}_t/(\sqrt{\hat{\bv}_t} + \epsilon),
\end{equation*}
exactly the AdamW update~\eqref{eq:adamw}. Hence, AdamW is a first-order approximation of a proximal update.

The careful reader might notice that the approximation from AdamW to the update in \eqref{eq:adamprox} becomes less accurate when $\eta_t$ becomes too large, and so be concerned whether this approximation is practical at all.
Fortunately, in practice, $\eta_t$ is never large enough for this to be an issue.
The remainder term of this approximation is $O(\lambda \eta_t^2)$
which we should always expect to be small as both $\lambda$ and $\eta_t$ are small.
So, we can expect AdamW and the update in \eqref{eq:adamprox} to perform similarly for learning rate schedules $\eta_t$ commonly employed in practice, and we will indeed confirm this empirically in Section~\ref{ssec:adamproxsimilarity}.

Let's now derive the consequences of this connection with proximal updates. 
First of all, at least in the convex case, the convergence rate of the proximal updates will depend on $\|\nabla f(\bx_t)\|^2_{M_t}$ rather than on $\|\nabla f(\bx_t) + \lambda \bx_t\|^2_{M_t}$~\cite{DuchiSST10}. This could be a significant improvement: the regularized loss function is never Lipschitz, so the regularized gradients $\nabla f(\bx_t) + \lambda \bx_t$ could be much larger than $\nabla f(\bx_t)$ when $f$ itself is Lipschitz.

More importantly, proximal updates are fundamentally better at keeping the weights small. Let us consider a couple of simple examples to see how this could be. First, suppose the weights are \emph{already zero}. Then, when taking an update according to \eqref{eq:sgd}, we increase the weights to $-M_t \bp_t$. In contrast, update \eqref{eq:prox_sgd} clearly leads to a smaller value. This is because it computes an update using the regularizer rather than its gradient. As an even more disturbing, yet actually more realistic example, consider the case that $\bx_{t-1}$ is non-zero, but $\bg_t=\boldsymbol{0}$. In this case, taking an update using \eqref{eq:sgd} may actually \emph{increase} the weights by causing $\bx_{t}$ to \emph{overshoot} the origin. In contrast, the proximal update will never demonstrate such pathological behavior. Notice that this pathological behavior of \eqref{eq:sgd} can be mitigated by properly tuning the learning rate. However, one of the  main attractions of adaptive optimizers is that we should not need to tune the learning rate as much. Thus, the proximal update can be viewed as augmenting the adaptive methods with an even greater degree of learning-rate robustness.

\noindent\textbf{AdamW is Scale-Free}
We have discussed what advantages the proximal step hidden in AdamW can give but have not yet taken into consideration the specific shape of the update. Here instead we will look closely at the $\bp_t$ used in AdamW to show its \emph{scale-freeness}. Our main claim is: \emph{the lack of scale-freeness seems to harm Adam-$\ell_2$'s performance in certain scenarios in deep learning, while AdamW preserves the scale-freeness even with an $\ell_2$ regularizer}. We will motivate this claim theoretically in this section and empirically in Section~\ref{sec:exp}.

An optimization algorithm is said to be \emph{scale-free} if its iterates do not change when one multiplies any coordinate of all the gradients of the losses $f_t$  by a positive constant~\citep{OrabonaP18}. It turns out that the update~\eqref{eq:adamw} of AdamW and the update~\eqref{eq:adamprox} are both scale-free when $\epsilon = 0$. This is evident for AdamW as the scaling factor for any coordinate of the gradient is kept in both $\hat{\bm_t}$ and $\sqrt{\hat{\bv_t}}$ and will be canceled out when dividing them. \emph{(In practical applications, though, $\epsilon$ is very small but not zero, so we empirically verify in Section~\ref{ssec:lossmul} that it is small enough to still approximately ensure the scale-free property.)} In contrast, for Adam-$\ell_2$, the addition of the weight decay vector to the gradient (Line 5 of Algorithm~\ref{algo:adamw}) destroys this property.

We want to emphasize the comparison between Adam-$\ell_2$ and AdamW: once Adam-$\ell_2$ adopts a non-zero $\lambda$, it loses the scale-freeness property; in contrast, AdamW enjoys this property for arbitrary $\lambda$. The same applies to any AdaGrad-type and Adam-type algorithm that incorporates the squared $\ell_2$ regularizer by simply adding the gradient of the $\ell_2$ regularizer directly to the gradient of $f$, as in Adam-$\ell_2$ (as implemented in Tensorflow and Pytorch). Such algorithms are scale-free only when they do not use weight decay.

We stress that the scale-freeness is an important but largely overlooked property of an optimization algorithm. It has already been utilized to explain the success of AdaGrad~\citep{OrabonaP18}. Recently, \citet{AgarwalAHKZ20} also provides theoretical and empirical support for setting the $\epsilon$ in the denominator of AdaGrad to be 0, thus making the update scale-free.

Below, we show how scale-freeness can reduce the condition number of a certain class of functions.

\noindent\textbf{Scale-Freeness Provides Preconditioning}
It is well-known that the best convergence rate when minimizing a twice continuously differentiable function $f$  using a first-order optimization algorithm (e.g., gradient descent) must depend on the \emph{condition number}, i.e., the ratio of the largest to the smallest eigenvalue of the Hessian~\citep[Theorem 2.1.13][]{Nesterov04}. One way to reduce the effect of the condition number is to use a \emph{preconditioner}~\citep{NocedalW06}. While originally designed for solving systems of linear equations, preconditioning can be extended to the optimization of non-linear functions and it should depend on the Hessian of the function~\citep{BoydV04,Li18}.
However, it is unclear how to set the preconditioner given that the Hessian might not be constant~\citep[Section 9.4.4][]{BoydV04} and 
in stochastic optimization the Hessian cannot be easily estimated~\citep{Li18}. 
In the following theorem, we show that scale-freeness gives similar advantages to the use of an optimal diagonal preconditioner, \emph{for free} (proof in the Appendix).

\begin{theorem}
\label{thm:scalefree}
Let $f$ be a twice continuously differentiable function and $\bx^*$ such that $\nabla f(\bx^*)=\boldsymbol{0}$. Then, let $\tilde{f}_\Lambda$ be the family of functions such that $\nabla \tilde{f}_\Lambda(\bx^*) = \boldsymbol{0}$, and $\nabla^2 \tilde{f}_\Lambda(\bx) = \Lambda\nabla^2 f(\bx)$, where $\Lambda = diag(\lambda_1, \ldots, \lambda_d) \succeq 0$. Then, running any scale-free optimization algorithm on $f$ and $\tilde{f}_\Lambda$ will result exactly in the same iterates, assuming the same noise on the gradients. Moreover, any dependency on the condition number of the scale-free algorithm will be reduced to the smallest condition number among all the functions $\tilde{f}_\Lambda$.
\end{theorem}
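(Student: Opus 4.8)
The plan is to reduce everything to the single algebraic identity $\nabla \tilde{f}_\Lambda(\bx) = \Lambda \nabla f(\bx)$, which exhibits the whole family as a coordinate-wise rescaling of the gradients of $f$; once this is in hand, the definition of scale-freeness does almost all of the remaining work.

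First I would establish this identity. Consider the vector field $\bh(\bx) \defeq \nabla \tilde{f}_\Lambda(\bx) - \Lambda \nabla f(\bx)$. Its Jacobian is $\nabla^2 \tilde{f}_\Lambda(\bx) - \Lambda \nabla^2 f(\bx)$, which vanishes identically by hypothesis (here I use that $\Lambda$ is a constant matrix, so the Jacobian of $\Lambda \nabla f$ is exactly $\Lambda \nabla^2 f$). Since $\R^d$ is connected, $\bh$ is constant, and evaluating at $\bx^*$ gives $\bh(\bx^*) = \boldsymbol{0} - \Lambda\boldsymbol{0} = \boldsymbol{0}$, which uses both critical-point conditions. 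Hence $\nabla \tilde{f}_\Lambda \equiv \Lambda \nabla f$, i.e. coordinate $i$ of the gradient is scaled precisely by $\lambda_i$.

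Next I would promote this to the stochastic setting. Reading ``the same noise on the gradients'' as applying the family construction to each stochastic component, so that $\nabla \tilde{f}_{\Lambda,t} = \Lambda \nabla f_t$ for every realized sample, the stochastic gradient fed to the algorithm on $\tilde{f}_\Lambda$ is, coordinate by coordinate, exactly $\lambda_i$ times the one fed on $f$ along the same sample path. Restricting to $\Lambda \succ 0$ so that each $\lambda_i>0$ is a genuine positive rescaling (the case $\lambda_i=0$ falls outside the scale-free definition and is a degenerate boundary case I would treat separately or by continuity), the definition of scale-freeness then applies verbatim and the iterates on $f$ and on $\tilde{f}_\Lambda$ coincide, which is the first conclusion.

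For the condition-number statement the idea is that one identical trajectory may be measured against whichever family member gives the best guarantee. Since $\{\bx_t\}$ is the same across all $\Lambda \succ 0$ and $\bx^*$ is a common critical point, any iterate-convergence bound of the form $\|\bx_T - \bx^*\| \le C(\kappa(\tilde{f}_\Lambda))\,\|\bx_0 - \bx^*\|$, with $C$ nondecreasing in the condition number, holds simultaneously for every $\Lambda$ with an identical left-hand side; taking the infimum over the family replaces $\kappa$ by $\min_\Lambda \kappa(\tilde{f}_\Lambda)$. I expect the main obstacle to live almost entirely outside the clean part above, and to be twofold. The first is a well-posedness issue: $\Lambda \nabla^2 f(\bx)$ must be symmetric to be a legitimate Hessian, which forces $\Lambda$ to commute with $\nabla^2 f(\bx)$ at every $\bx$, so the family is only as rich — and the gain only as large — as this commutation permits, the cleanest case being an (approximately) diagonal Hessian. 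The second is making ``dependency on the condition number'' precise without fixing a specific algorithm; I would phrase it as follows: the common trajectory satisfies each member's guarantee, hence it satisfies the best among them, which is exactly the claimed reduction.
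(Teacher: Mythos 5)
Your proposal is correct and follows essentially the same route as the paper: both arguments hinge on establishing the identity $\nabla \tilde{f}_\Lambda(\bx) = \Lambda \nabla f(\bx)$ from the Hessian hypothesis together with the shared critical point (the paper via the Fundamental Theorem of Calculus along the segment from $\bx^*$, you via the vanishing Jacobian of the difference field plus connectedness, which is the same lemma in different clothing), and then invoke the definition of scale-freeness to conclude the iterates coincide. Your added caveats --- that the scale-free definition really requires $\Lambda \succ 0$ rather than $\succeq 0$, and that $\Lambda \nabla^2 f$ must be symmetric for $\tilde{f}_\Lambda$ to exist --- are legitimate points the paper glosses over, but they do not change the argument.
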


To give an example of when this is advantageous, consider when $\nabla^2 f(\bx)$ is a diagonal matrix:
\[
\nabla^2 f(\bx) = diag(g_1(\bx), g_2(\bx), \ldots, g_d(\bx))~.
\]
Assume $\mu \le \mu_i \le g_i(\bx) \le M_i \le M$ for $i \in \{1,\ldots,d\}$. Denote $j = \arg\max_i M_i/\mu_i$. Choose $\lambda_i$ s.t.~$\mu_j\le\lambda_i\mu_i\le\lambda_i g_i(\bx)\le\lambda_i M_i\le M_j$ then $\Lambda\nabla^2 f(\bx)$ has a condition number $\kappa^{\prime} = M_j/\mu_j$. This gives scale-free algorithms a big advantage when $\max_i M_i / \mu_i \ll M/\mu$.

Another example is the one of the quadratic functions.

\begin{wrapfigure}{r}{0.38\textwidth}
\vspace{-1.2em}
\begin{subfigure}{\linewidth}
  \centering
\includegraphics[width=\linewidth]{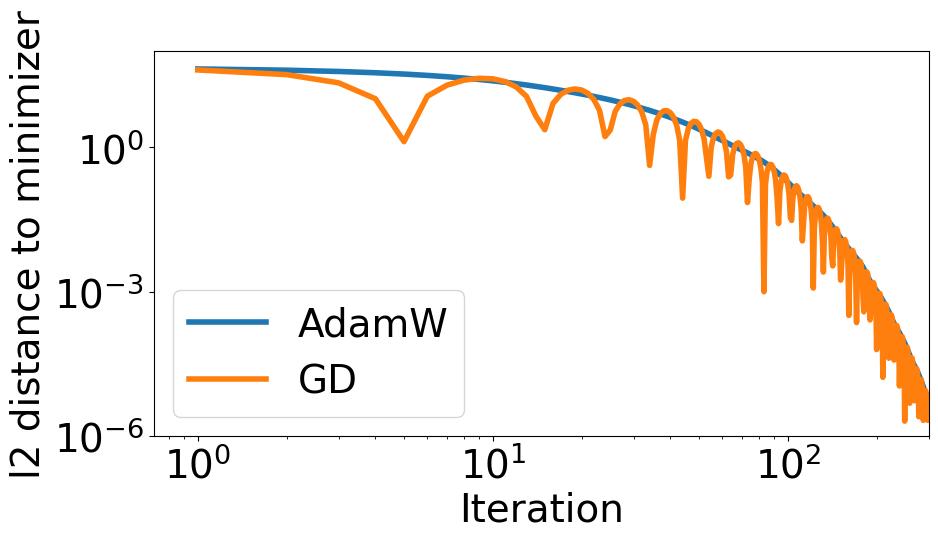}
\subcaption{Condition number $1$.}
\end{subfigure}
\begin{subfigure}{\linewidth}
  \centering
\includegraphics[width=\linewidth]{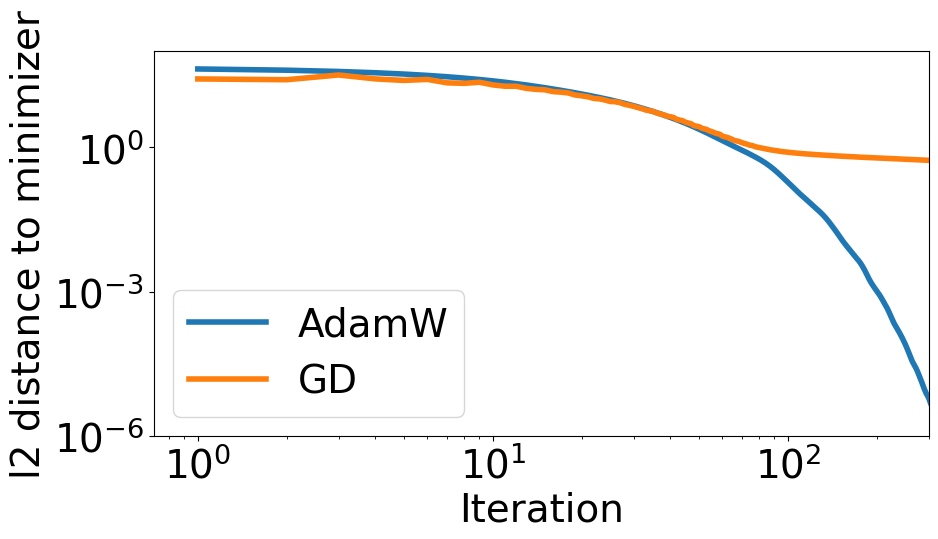}
\subcaption{Condition number $100000$.}
\end{subfigure}
\caption{Non-scale-free GD v.s.~ scale-free AdamW on quadratic functions with different condition numbers.}
\vspace{-1.5em}
\label{fig:quadratic}
\end{wrapfigure}
\textbf{Corollary 1.}
\emph{For quadratic problems $f(\bx) = \tfrac12 \bx^\top H \bx + \bb^\top \bx +c $, with $H$ diagonal and positive definite, any scale-free algorithm will not differentiate between minimizing $f$ and $\tilde{f}(\bx) = \frac12 \bx^\top \bx + (H^{-1} \bb)^\top \bx +c$. As the condition number of $\tilde{f}$ is 1, the operation, and most importantly, the convergence, of a scale-free algorithm will not be affected by the condition number of $f$ at all.}

Figure~\ref{fig:quadratic} illustrates Corollary~1: we compare GD (non-scale-free) with AdamW (scale-free) on optimizing two quadratic functions with the same minimizer, but one's Hessian matrix being a rescaled version of the other's, resulting in different condition numbers. The figure clearly shows that, even after tuning the learning rates, the updates of AdamW (starting from the same point) and thus its convergence to the minimizer, is completely unaffected by the condition number, while GD's updates change drastically and its performance deteriorates significantly when the condition number is large. It is not hard to imagine that such poor training performance would likely also lead to poor testing performance.

This can also explain AdaGrad's improvements over SGD in certain scenarios. As an additional example, in Appendix~\ref{sec:restart_adagrad} we analyze a variant of AdaGrad with restarts and show an improved convergence on strongly convex functions due to scale-freeness. Note that the folklore justification for such improvements is that the learning rate of AdaGrad approximates the inverse of the Hessian matrix, but this is incorrect: AdaGrad does not compute Hessians and there is no reason to believe it approximates them in general.

More importantly, another scenario demonstrating the advantage of scale-freeness is training deep neural networks. Neural networks are known to suffer from the notorious
problem of vanishing/exploding gradients~\citep{BengioSF94, GlorotB10, PascanuMB13}. This problem leads to the gradient scales being very different across layers, especially between the first and the last layers. The problem is particularly severe when the model is not equipped with normalization mechanisms like Batch Normalization~\citep{IoffeS15}. In such cases, when using a non-scale-free optimization algorithm (e.g., SGD), the first layers and the last layers will proceed at very different speeds, whereas a scale-free algorithm ensures that each layer is updated at a similar pace. We will investigate these effects empirically in the next section.

\section{Deep Learning Empirical Evaluation}
\label{sec:exp}

In this section, we empirically compare Adam-$\ell_2$ with AdamW.
First (Section~\ref{ssec:image_classification}), we report experiments for deep neural networks on image classification tasks (CIFAR10/100). Here, AdamW enjoys a significant advantage over Adam-$\ell_2$ when BN is switched off on deeper neural networks. We also report the correlation between this advantage and the scale-freeness property of AdamW.
Next (Section~\ref{ssec:lossmul}), we show that AdamW is still almost scale-free even when the $\epsilon$ used in practice is not 0, and how, contrary to AdamW, Adam-$\ell_2$ is not scale-free.
Finally (Section~\ref{ssec:adamproxsimilarity}), we show that AdamW performs similarly to the update in \eqref{eq:adamprox}, which we will denote by AdamProx below, thus supporting the observations in Section~\ref{sec:theory}.

\noindent\textbf{Data Normalization and Augmentation:}
We consider the image classification task on CIFAR-10/100 datasets. Images are normalized per channel using the means and standard deviations computed from all training images. We adopt the data augmentation technique following~\citet{LeeXGZT15} (for training only): 4 pixels are padded on each side of an image and a $32\times32$ crop is randomly sampled from the padded image or its horizontal flip.

\noindent\textbf{Models:}
For the CIFAR-10 dataset, we employ the Residual Network\footnote{\url{https://github.com/akamaster/pytorch_resnet_cifar10}} model~\citep{HeZRS16} of 20/44/56/110/218 layers; and for CIFAR-100, we utilize the DenseNet-BC\footnote{\url{https://github.com/bearpaw/pytorch-classification}} model~\citep{HuangLVW17} with 100 layers and a growth-rate of 12. The loss is the cross-entropy loss.

\noindent\textbf{Hyperparameter tuning:} For both Adam-$\ell_2$ and AdamW, we set $\beta_1=0.9$, $\beta_2=0.999$, $\epsilon=10^{-8}$ as suggested in the original Adam paper~\cite{KingmaB15}. To set the initial step size $\alpha$ and weight decay parameter $\lambda$, we grid search over $\{0.00005, 0.0001, 0.0005, 0.001, 0.005\}$ for $\alpha$ and $\{0, 0.00001, 0.00005, 0.0001, 0.0005, 0.001\}$ for $\lambda$. Whenever the best performing hyperparameters lie in the boundary of the searching grid, we always extend the grid to ensure that the final best-performing hyperparameters fall into the interior of the grid.

\noindent\textbf{Training:} For each experiment configuration (e.g., 110-layer Resnet without BN), we randomly select an initialization of the model to use as a fixed starting point for all optimizers and hyperparameter settings. We use a mini-batch of 128, and train 300 epochs unless otherwise specified.

\begin{figure}[t]
\centering
\begin{subfigure}[b]{0.48\textwidth}
   \includegraphics[width=\textwidth]{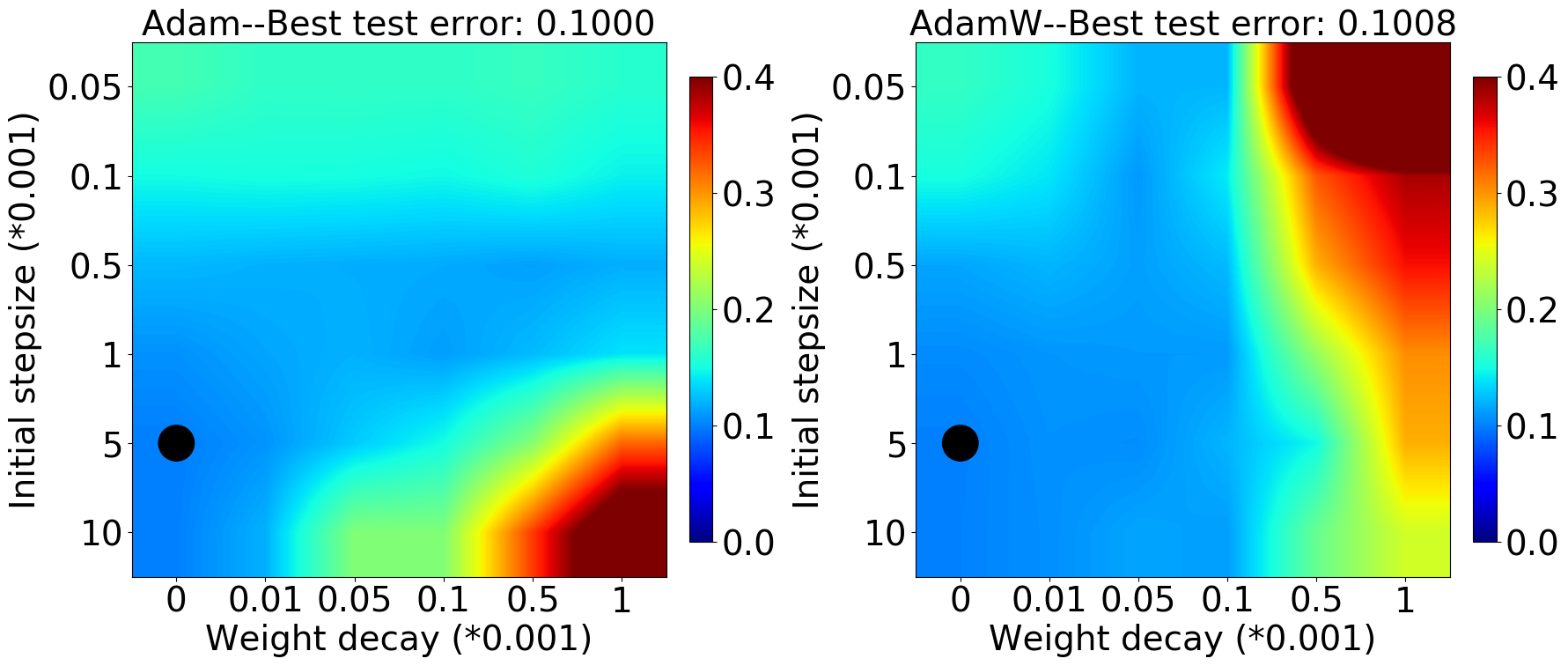}
   \caption{20 Layer Resnet on CIFAR10}
   \label{fig:resnet20} 
\end{subfigure}
\hspace{0.02\textwidth}
\begin{subfigure}[b]{0.48\textwidth}
   \includegraphics[width=\textwidth]{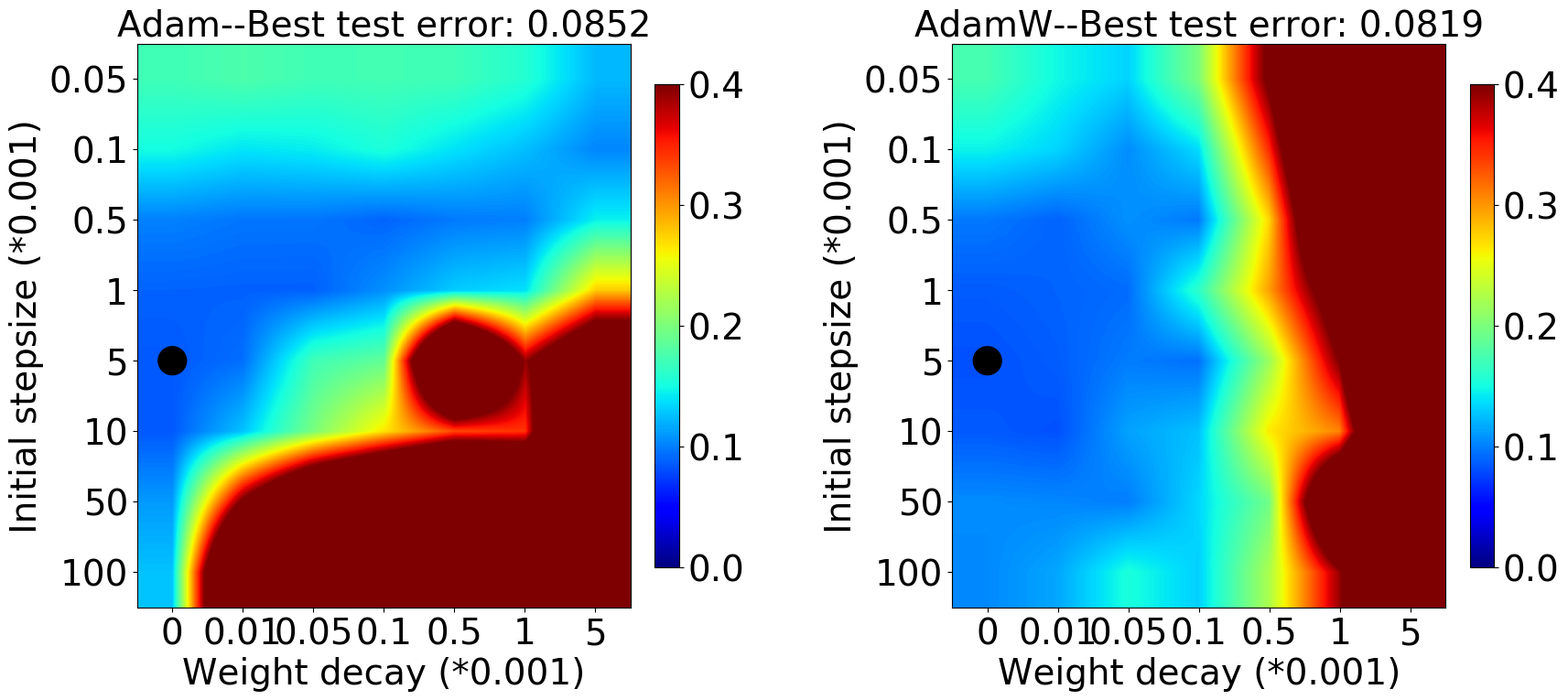}
    \caption{110 Layer Resnet on CIFAR10}
   \label{fig:resnet110} 
\end{subfigure}

\begin{subfigure}[b]{0.48\textwidth}
   \includegraphics[width=\textwidth]{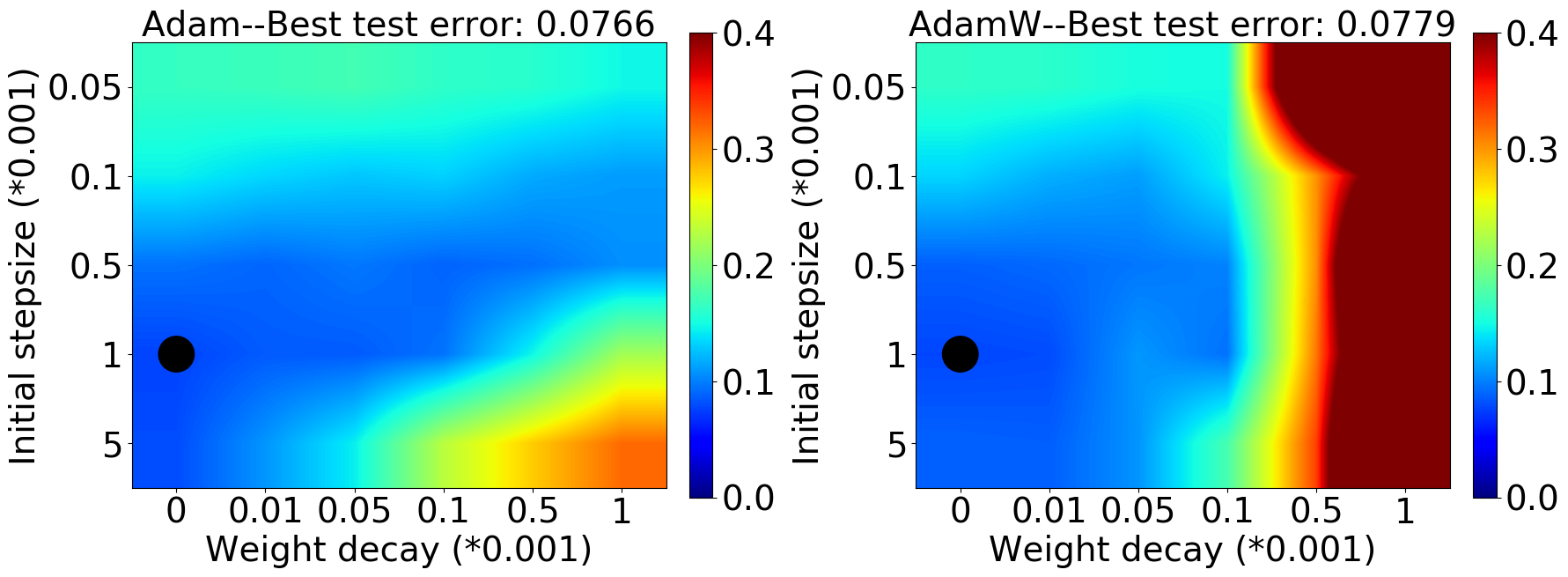}
   \caption{218 Layer Resnet on CIFAR10}
   \label{fig:resnet218} 
\end{subfigure}
\hspace{0.02\textwidth}
\begin{subfigure}[b]{0.48\textwidth}
   \includegraphics[width=\textwidth]{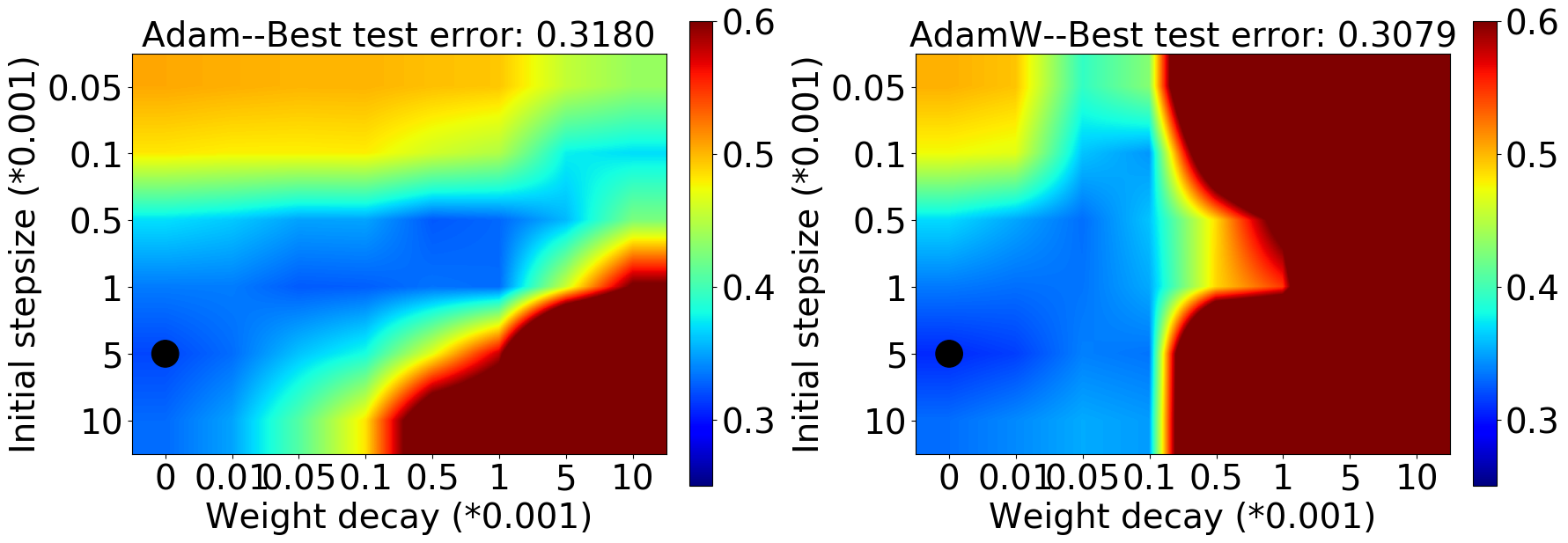}
   \caption{100 layer DenseNet-BC on CIFAR100}
   \label{fig:densenet} 
\end{subfigure}

\caption{The final Top-1 test error on using AdamW vs.~Adam-$\ell_2$ on training a Resnet/DenseNet with Batch Normalization on CIFAR10/100 (\emph{the black circle denotes the best setting}).}
\label{fig:bn}
\end{figure}

\begin{figure}[h!]
\centering
\begin{subfigure}[b]{\textwidth}
   \includegraphics[width=0.30\textwidth]{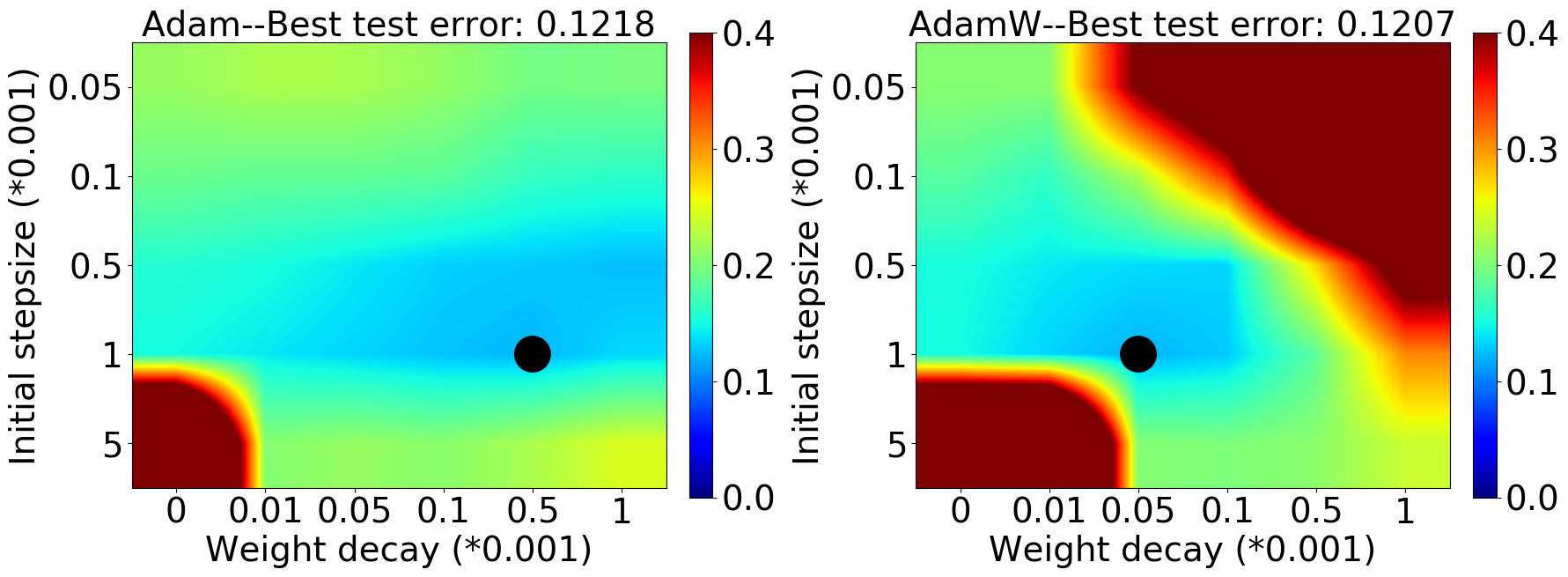}
   \hspace{0.02\textwidth}
   \includegraphics[width=0.30\textwidth]{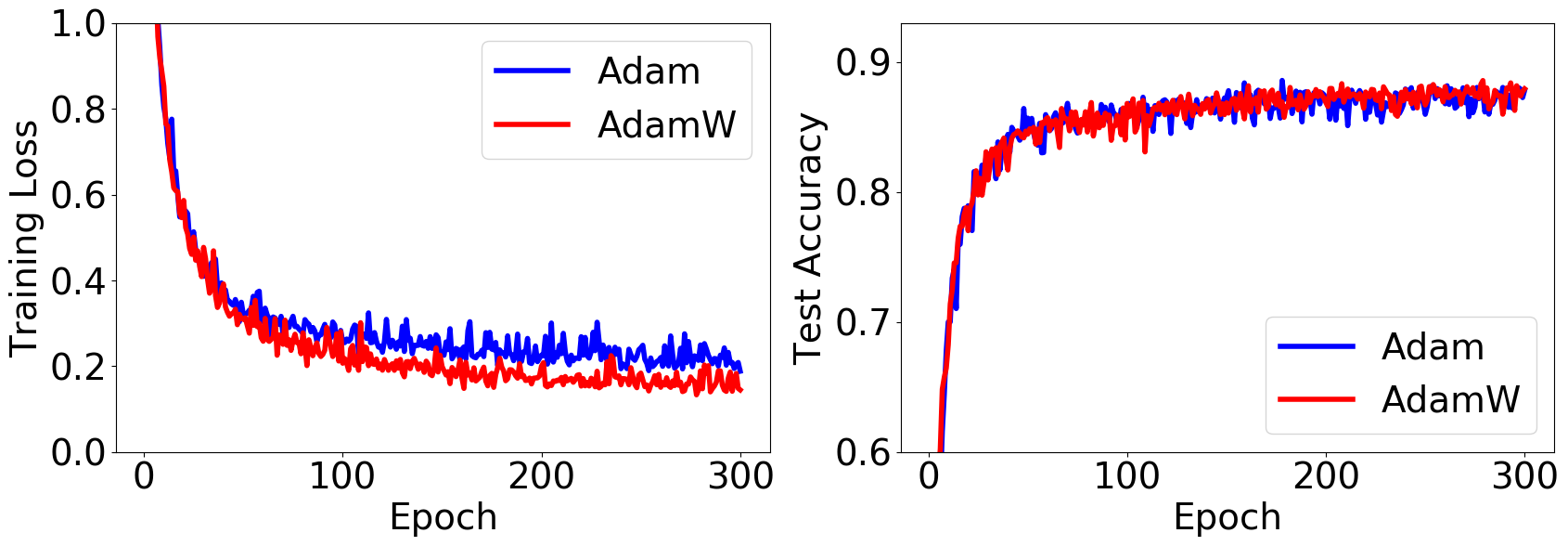}
   \hspace{0.02\textwidth}
   \includegraphics[width=0.15\linewidth]{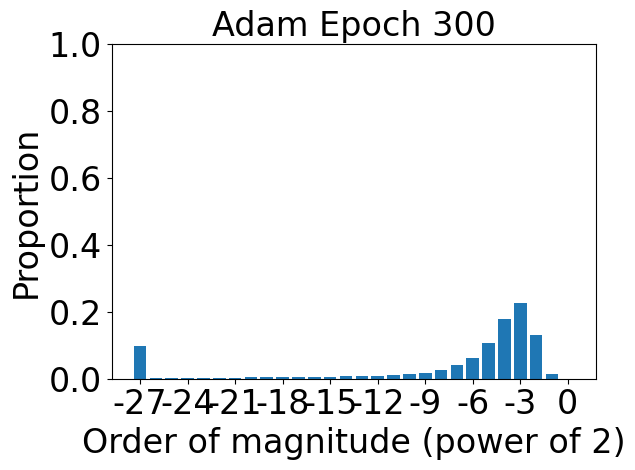}
		\hspace{0pt}
		\includegraphics[width=0.15\linewidth]{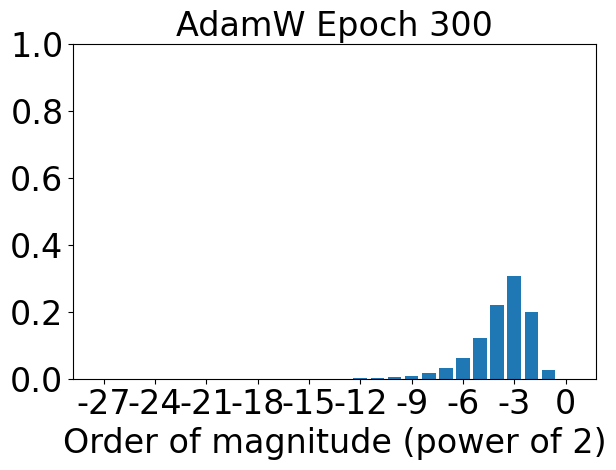}
   \caption{20 Layer Resnet on CIFAR10}
   \label{fig:resnet20_nobn} 
\end{subfigure}

\begin{subfigure}[b]{\textwidth}
   \includegraphics[width=0.30\textwidth]{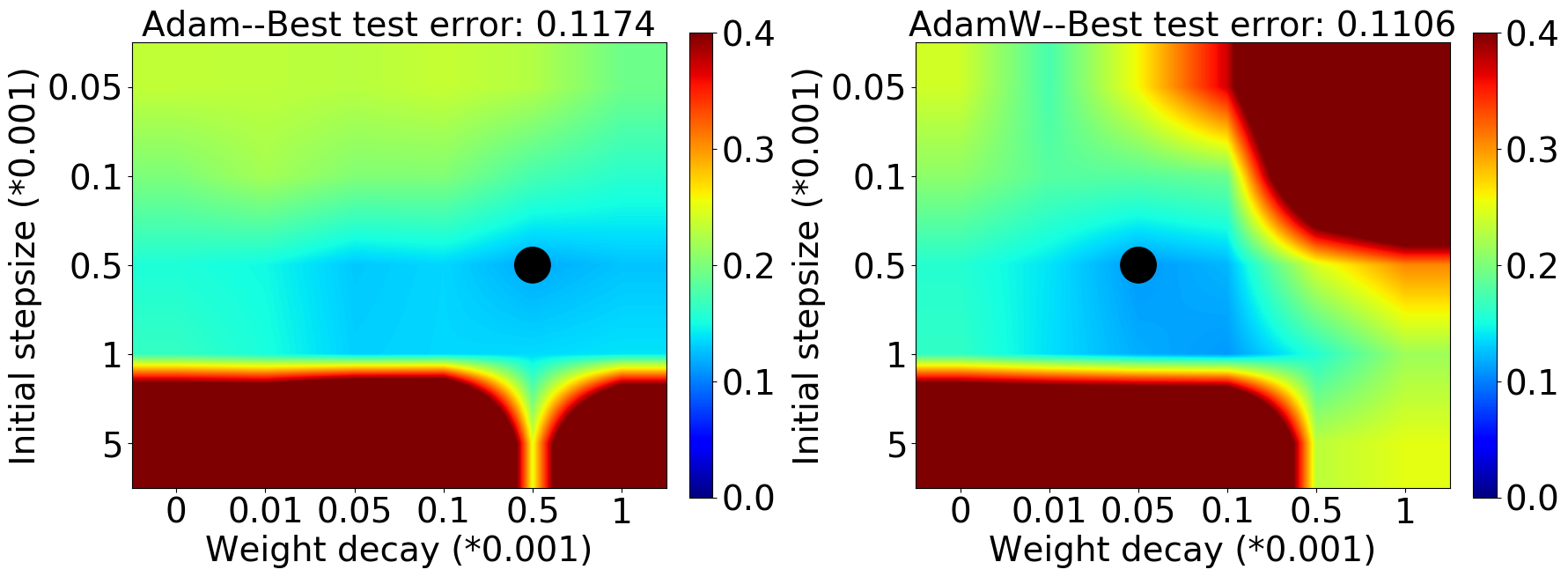}
   \hspace{0.02\textwidth}
   \includegraphics[width=0.30\textwidth]{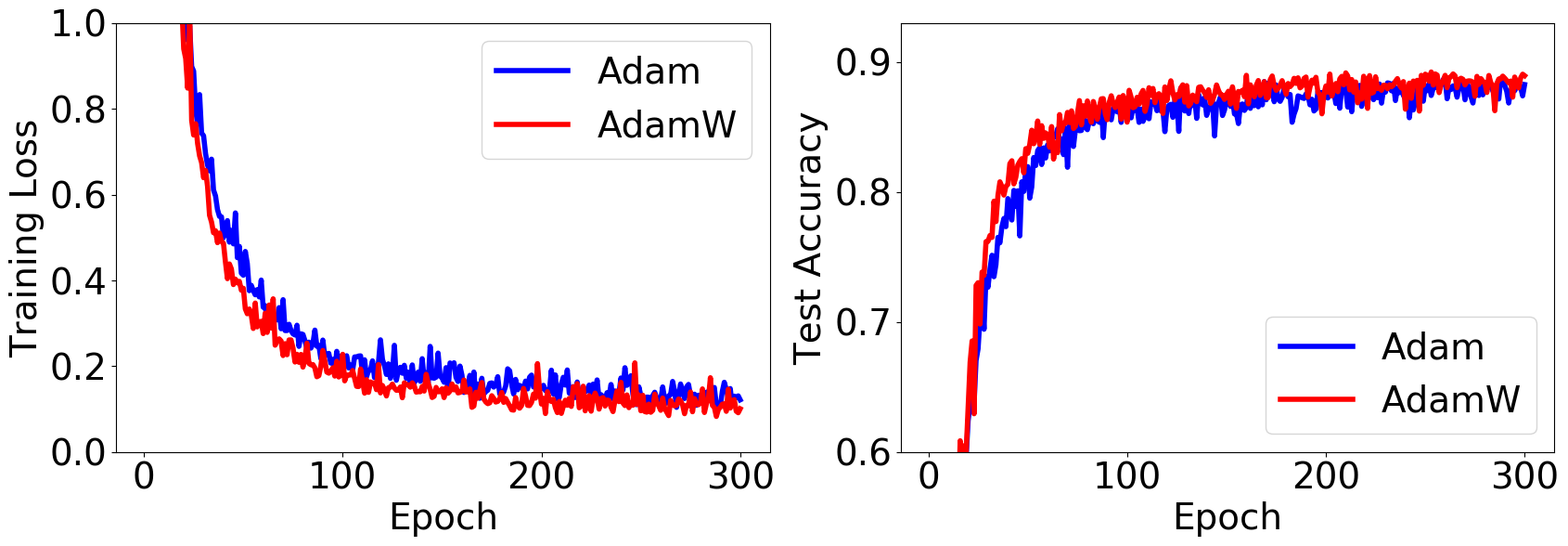}
   \hspace{0.02\textwidth}
   \includegraphics[width=0.15\linewidth]{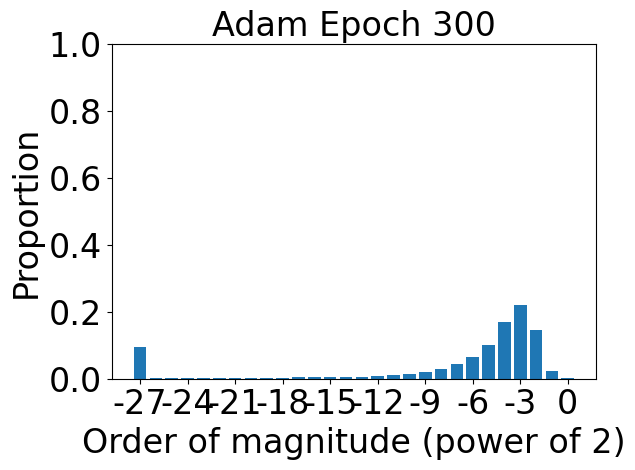}
		\hspace{0pt}
		\includegraphics[width=0.15\linewidth]{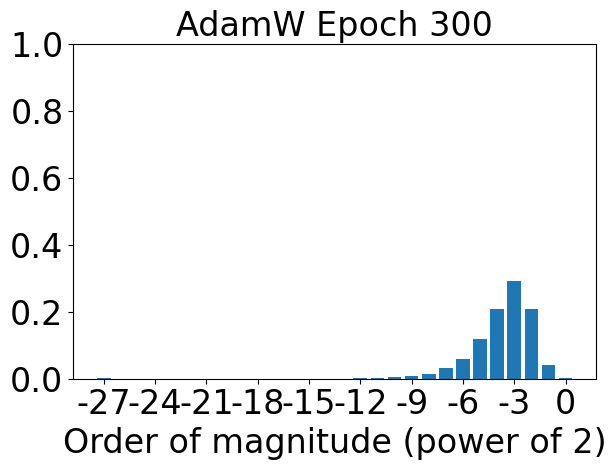}
   \caption{44 Layer Resnet on CIFAR10}
   \label{fig:resnet44_nobn} 
\end{subfigure}

\begin{subfigure}[b]{\textwidth}
   \includegraphics[width=0.30\textwidth]{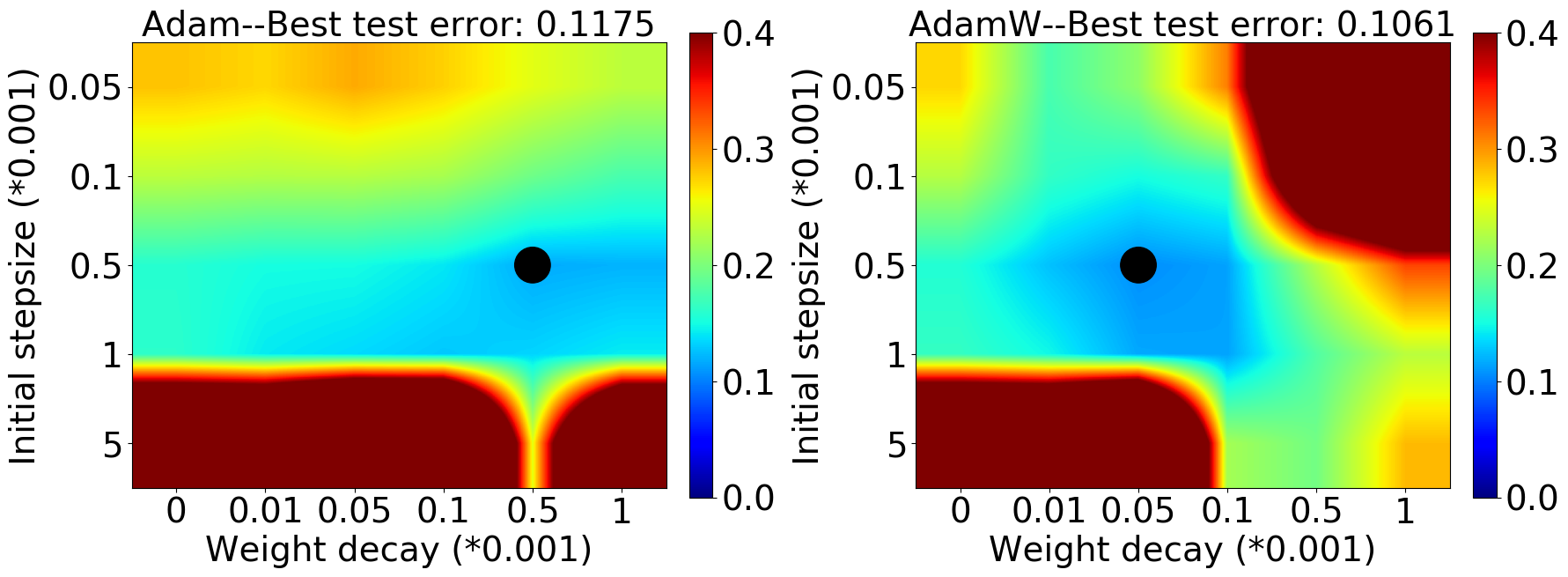}
   \hspace{0.02\textwidth}
   \includegraphics[width=0.30\textwidth]{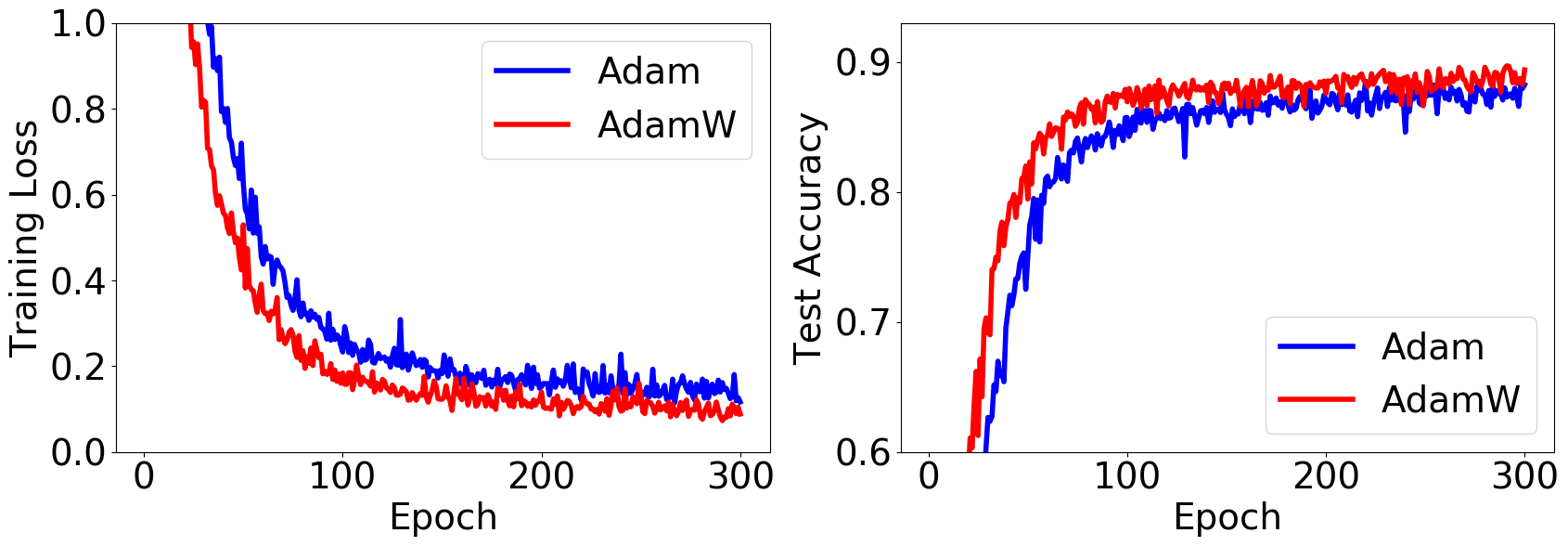}
   \hspace{0.02\textwidth}
   \includegraphics[width=0.15\linewidth]{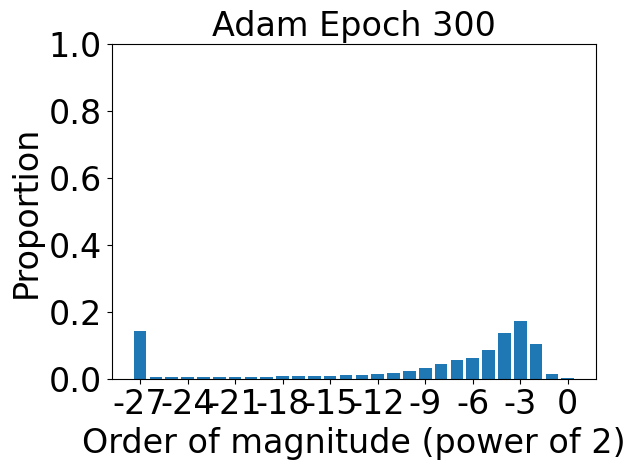}
		\hspace{0pt}
		\includegraphics[width=0.15\linewidth]{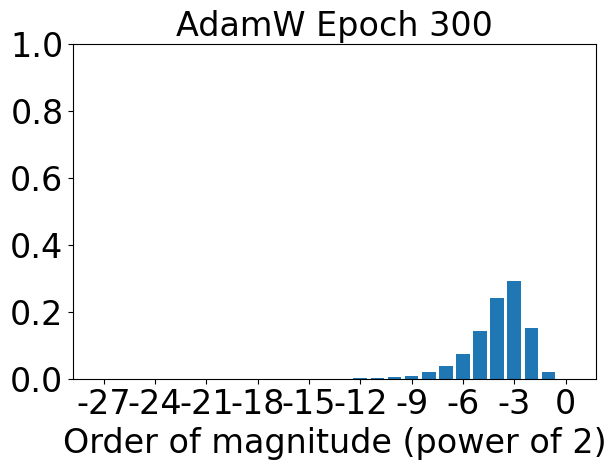}
   \caption{56 Layer Resnet on CIFAR10}
   \label{fig:resnet56_nobn} 
\end{subfigure}

\begin{subfigure}[b]{\textwidth}
   \includegraphics[width=0.30\textwidth]{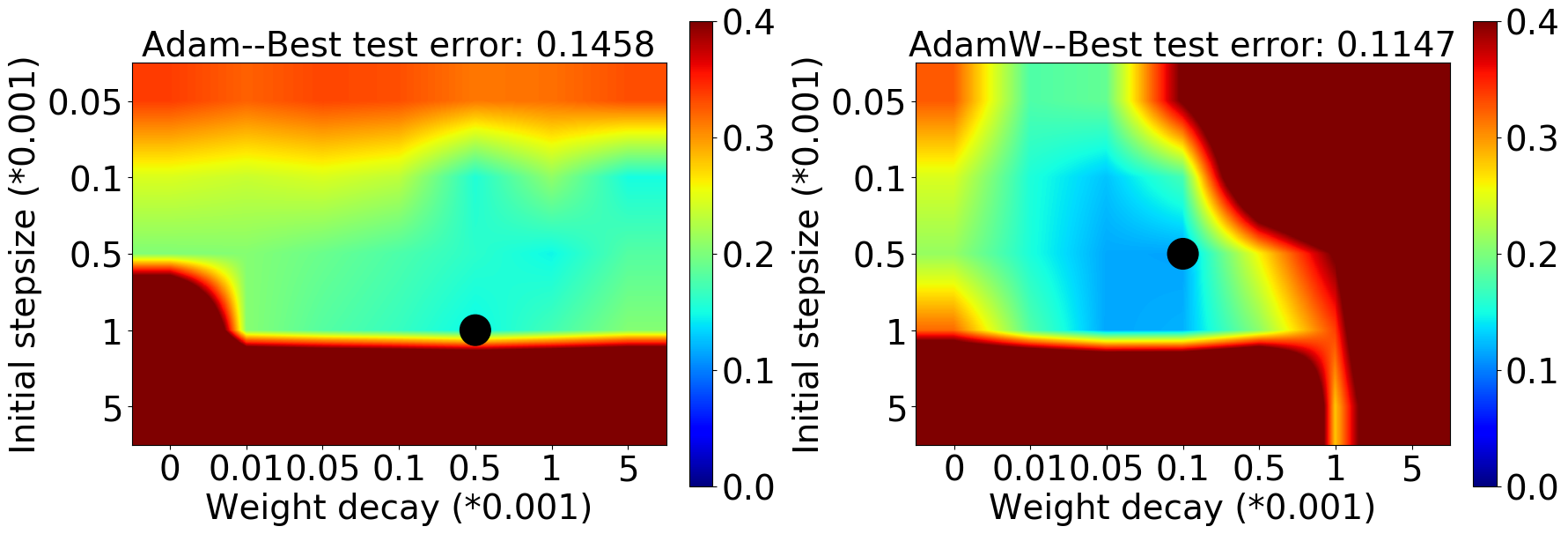}
   \hspace{0.02\textwidth}
   \includegraphics[width=0.30\textwidth]{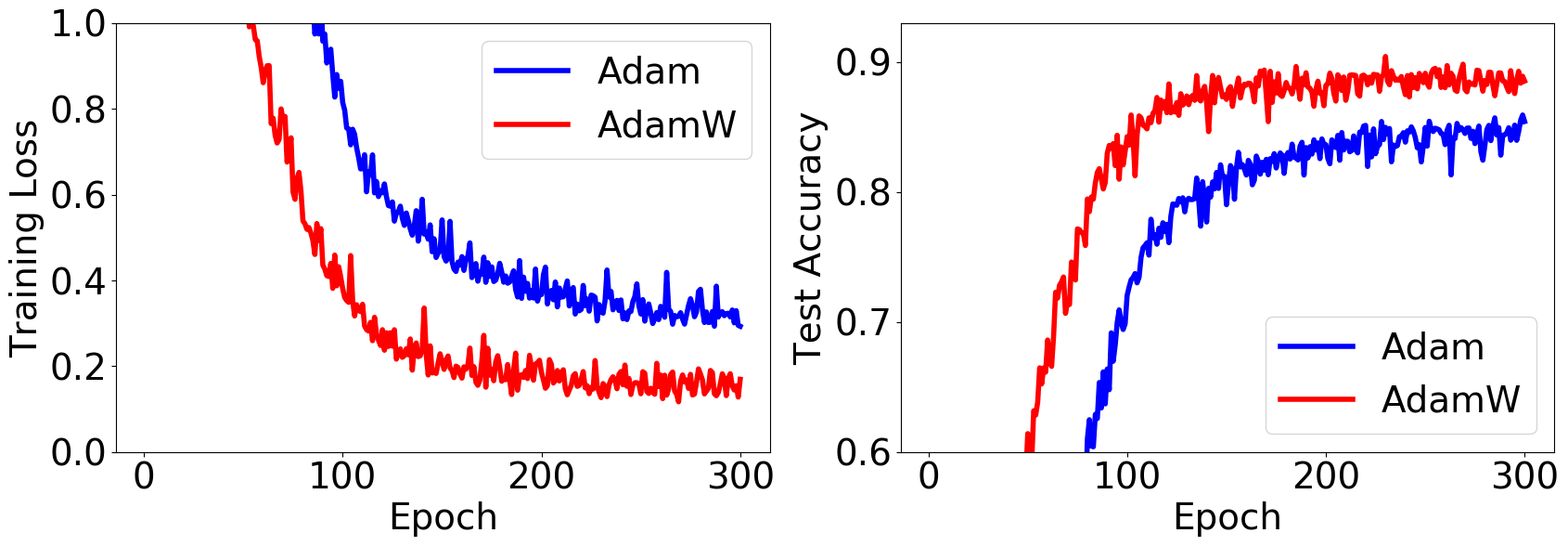}
   \hspace{0.02\textwidth}
   \includegraphics[width=0.15\linewidth]{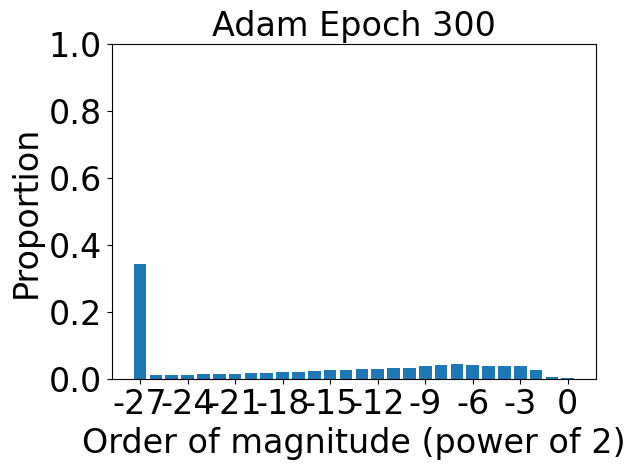}
		\hspace{0pt}
		\includegraphics[width=0.15\linewidth]{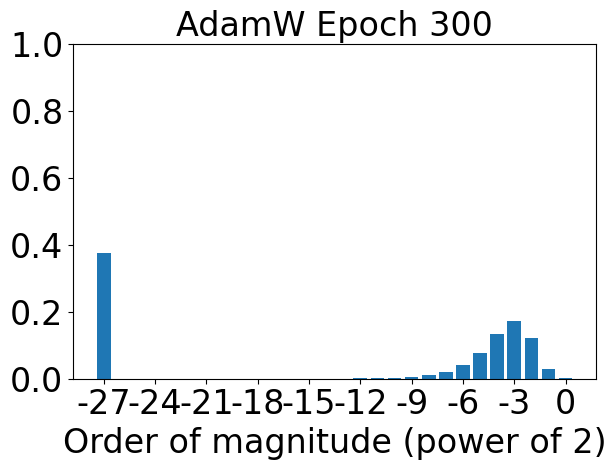}
   \caption{110 Layer Resnet on CIFAR10}
   \label{fig:resnet110_nobn} 
\end{subfigure}

\begin{subfigure}[b]{\textwidth}
   \includegraphics[width=0.30\textwidth]{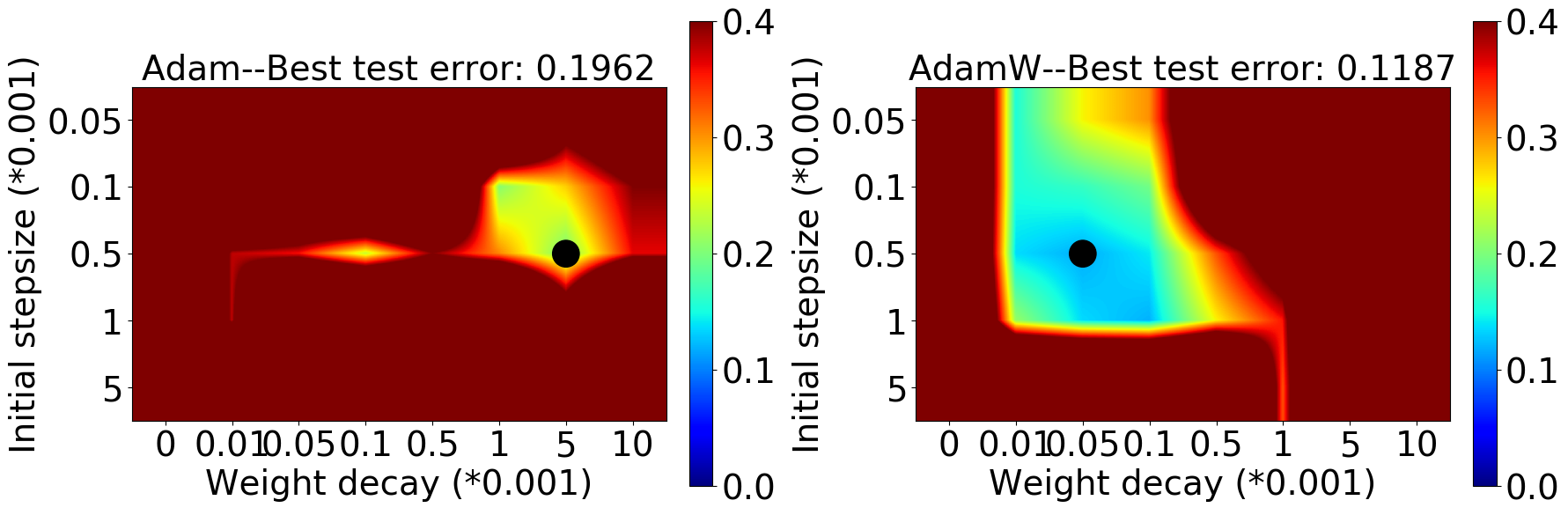}
   \hspace{0.02\textwidth}
   \includegraphics[width=0.30\textwidth]{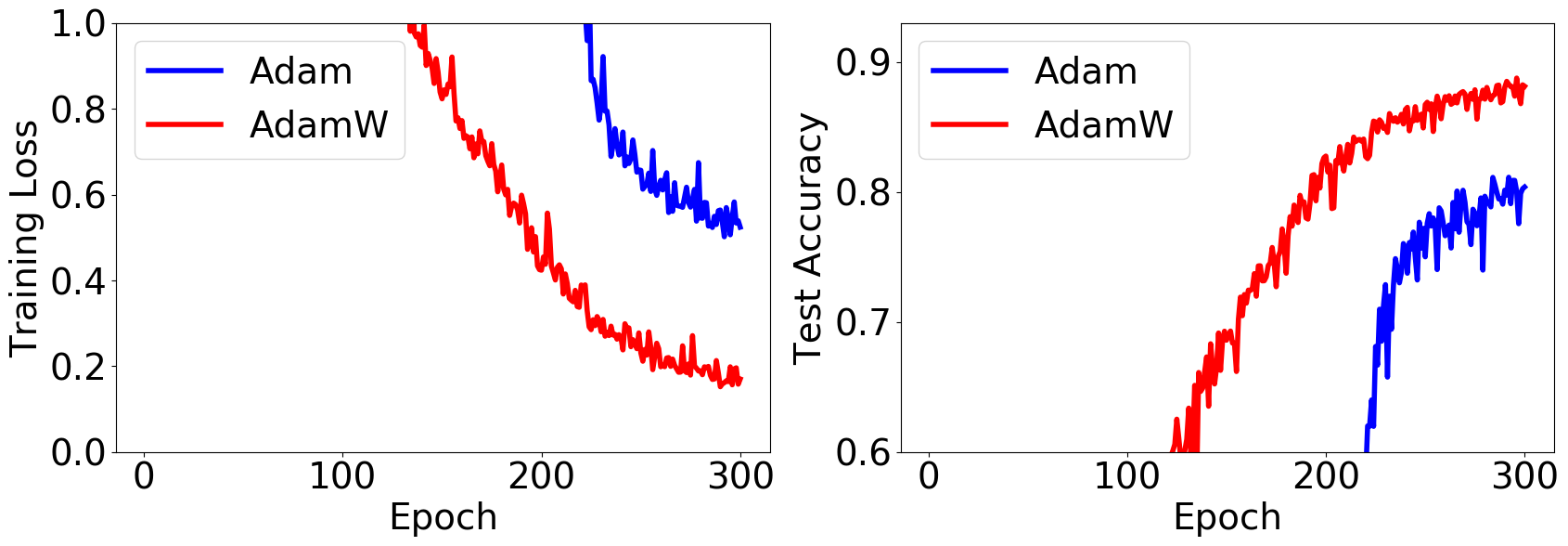}
   \hspace{0.02\textwidth}
   \includegraphics[width=0.15\linewidth]{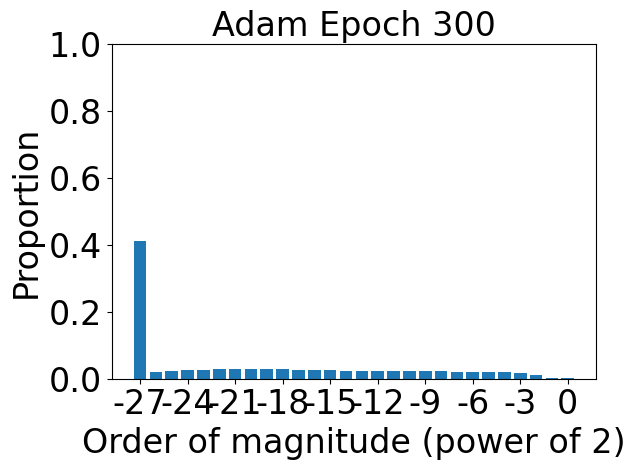}
		\hspace{0pt}
		\includegraphics[width=0.15\linewidth]{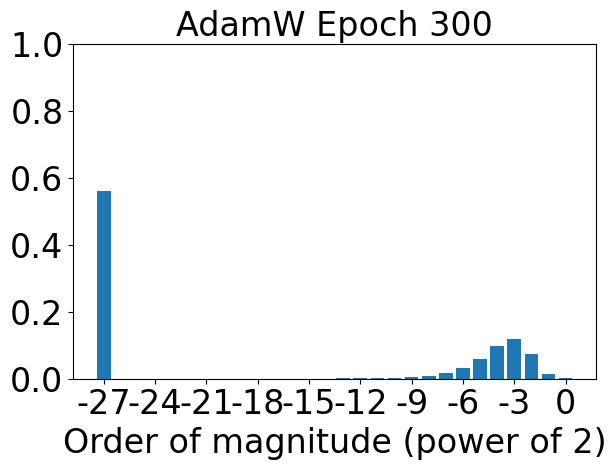}
   \caption{218 Layer Resnet on CIFAR10}
   \label{fig:resnet218_nobn} 
\end{subfigure}

\begin{subfigure}[b]{\textwidth}
   \includegraphics[width=0.30\textwidth]{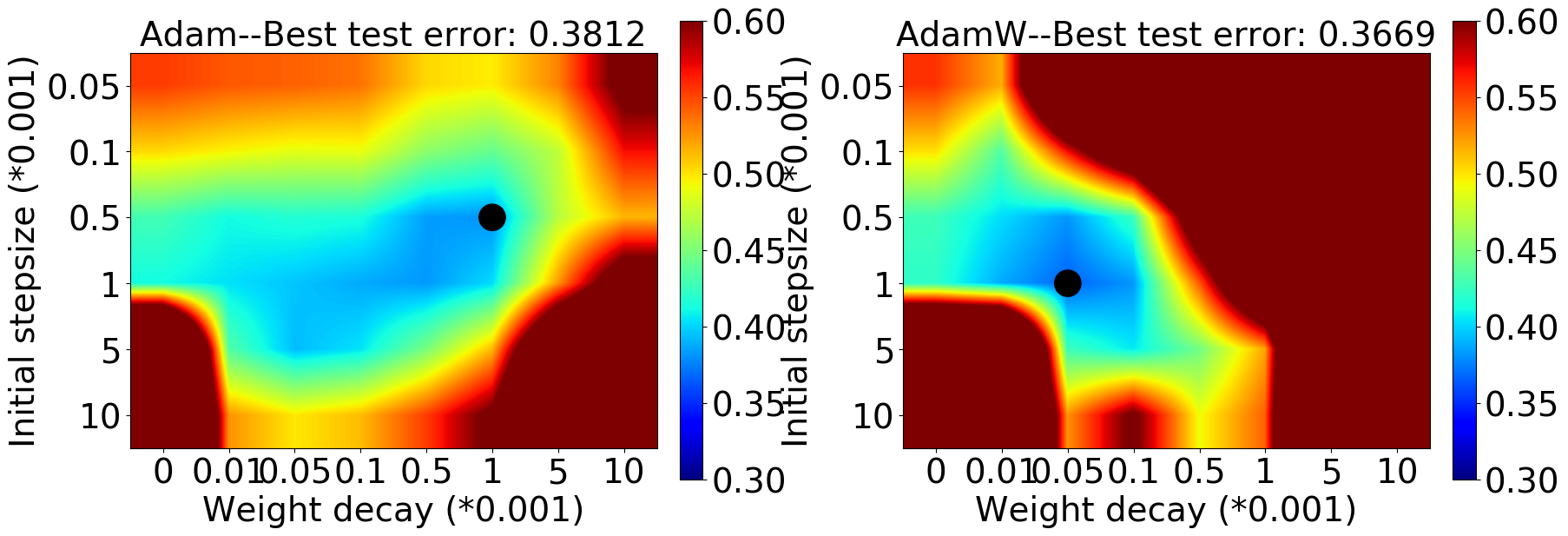}
   \hspace{0.02\textwidth}
   \includegraphics[width=0.30\textwidth]{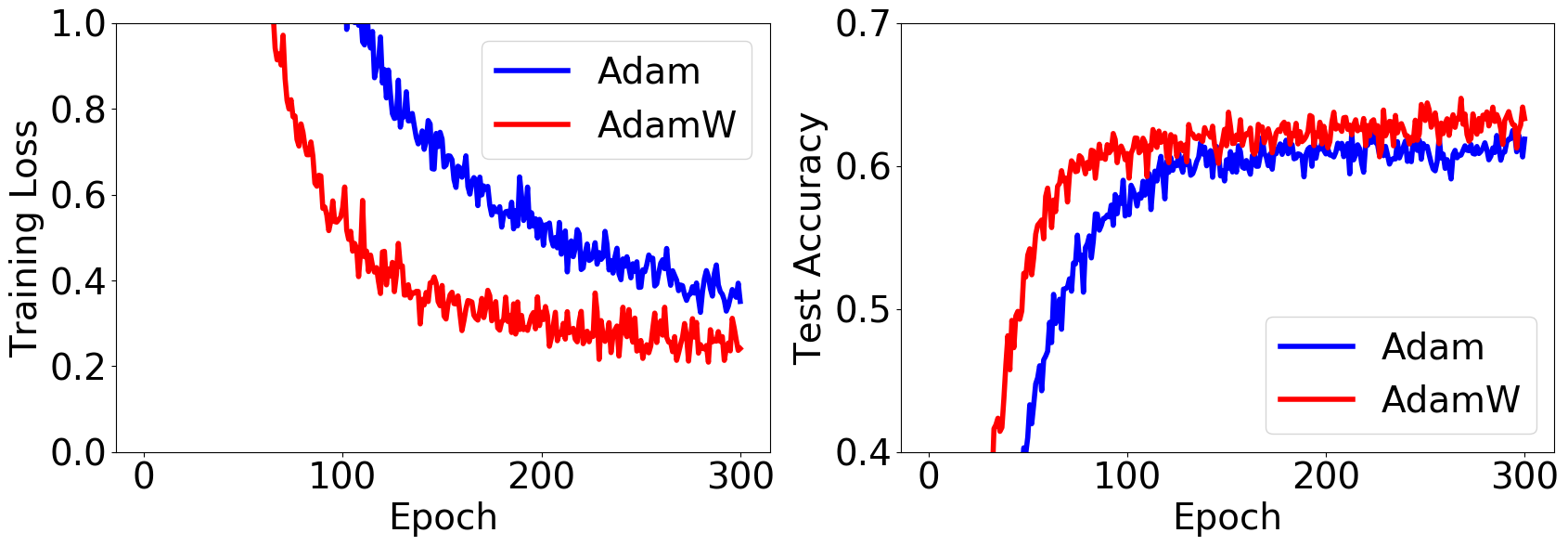}
   \hspace{0.02\textwidth}
   \includegraphics[width=0.15\linewidth]{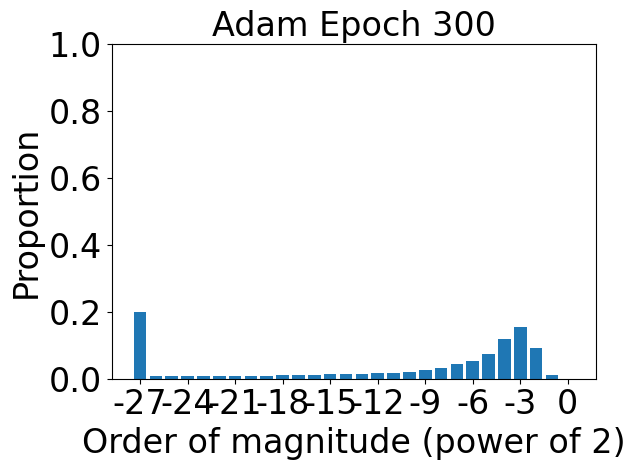}
		\hspace{0pt}
		\includegraphics[width=0.15\linewidth]{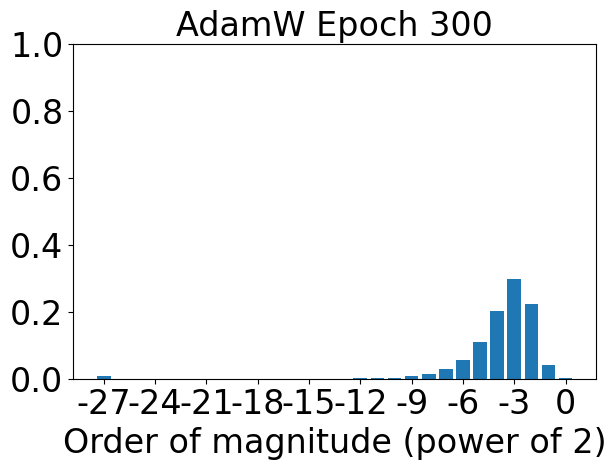}
   \caption{100 layer DenseNet-BC on CIFAR100}
   \label{fig:densenet_nobn} 
\end{subfigure}

\caption{On using AdamW vs.~Adam-$\ell_2$ on training a Resnet/DenseNet without Batch Normalization on CIFAR10/100. (Left two) The final Top-1 test error (\emph{the black circle denotes the best setting}). (Middle two) The training loss and test accuracy curve when employing the initial step size and the weight decay parameter that gives the smallest test error. (Right two) The histogram of the magnitude of corresponding updates of all coordinates of the network near the end of the training when employing the initial step size and the weight decay parameter that gives the smallest test error.}
\label{fig:nobn}
\end{figure}

\subsection{AdamW vs.~Adam-$\ell_2$: Influence of Batch Normalization and Correlation with Scale-freeness}
\label{ssec:image_classification}

\noindent\textbf{With BN, Adam-$\ell_2$ is on par with AdamW} Recently, \citet{BjorckWG20} found that AdamW has no improvement in absolute performance over sufficiently tuned Adam-$\ell_2$ in some reinforcement learning experiments. We also discover the same phenomenon in several image classification tasks, see Figure~\ref{fig:bn}. Indeed, the best weight decay parameter is $0$ for all cases and AdamW coincides with Adam-$\ell_2$  in these cases. Nevertheless, AdamW does decouple the optimal choice of the weight decay parameter from the initial step size much better than Adam-$\ell_2$ in all cases.

\noindent\textbf{Removing BN} Notice that the models used in Figure~\ref{fig:bn} all employ BN. BN works by normalizing the input to each layer across the mini-batch to make each coordinate have zero-mean and unit-variance. Without BN, deep neural networks are known to suffer from gradient explosion and vanishing~\citep{SchoenholzGGS17}. This means each coordinate of the gradient will have very different scales, especially between the first and last layers.
For non-scale-free algorithms, the update to the network weights will also be affected and each coordinate will proceed at a different pace. In contrast, scale-free optimizers are robust to such issues as the scaling of any single coordinate will not affect the update.
Thus, we consider the case where BN is removed as that is where AdamW and Adam-$\ell_2$ will show very different patterns due to scale-freeness.

\noindent\textbf{Without BN, AdamW Outperforms Adam-$\ell_2$} In fact, without BN, AdamW outperforms Adam-$\ell_2$ even when both are finely tuned, especially on relatively deep neural networks (see Figure~\ref{fig:nobn}). AdamW not only obtains a much better test accuracy but also trains much faster.

\noindent\textbf{AdamW's Advantage and Scale-freeness} We also observe that the advantage of AdamW becomes more evident as the network becomes deeper. Recall that as the depth grows, without BN, the gradient explosion and vanishing problem becomes more severe. This means that for the non-scale-free Adam-$\ell_2$, the updates of each coordinate will be dispersed on a wider range of scales even when the same weight decay parameter is employed. In contrast, the scales of the updates of AdamW will be much more concentrated in a smaller range. This is exactly verified empirically as illustrated in the 5th \& 6th columns of figures in Figure~\ref{fig:nobn}. There, we report the histograms of the absolute value of updates of Adam-$\ell_2$ vs.~AdamW of all coordinates near the end of training (for their comparison over the whole training process please refer to the Appendix~\ref{sec:hist_entire_train}).

This correlation between the advantage of AdamW over Adam-$\ell_2$ and the different spread of update scales which is induced by the scale-freeness property of AdamW provides empirical evidence on when AdamW excels over Adam-$\ell_2$.

\noindent\textbf{SGD and Scale-freeness} The reader might wonder why SGD is known to provide state-of-the-art performance on many deep learning architectures~\citep[e.g.,][]{HeZRS16, HuangLVW17} \emph{without} being scale-free. At first blush, this seems to contradict our claims that scale-freeness correlates with good performance. In reality, the good performance of SGD in very deep models is linked to the use of BN that normalizes the gradients. Indeed, we verified empirically that SGD fails spectacularly when BN is not used. For example, on training the 110 layer Resnet without BN using SGD with momentum and weight decay of $0.0001$, even a learning rate of $1e-10$ will lead to divergence. 

\subsection{Verifying Scale-Freeness}
\label{ssec:lossmul}

In the previous section, we elaborated on the scale-freeness property of AdamW and its correlation with AdamW's advantage over Adam-$\ell_2$. However, one may notice that in practice, the $\epsilon$ factor in the AdamW update is typically small but not 0, in our case $1e$-$8$, thus preventing it from completely scale-free. In this section, we verify that the effect of such an $\epsilon$ on the scale-freeness is negligible.

As a simple empirical verification of the scale-freeness, we consider the scenario where we multiply the loss function by a positive number.
Note that any other method to test scale-freeness would be equally good.
For a feed-forward neural network without BN, this means the gradient would also be scaled up by that factor. In this case, the updates of a scale-free optimization algorithm would remain exactly the same, whereas they would change for an optimization algorithm that is not scale-free. 

Figure~\ref{fig:lossmul} shows results of the loss function being multiplied by 10 and 100 respectively on optimizing a 110-layer Resnet with BN \emph{removed}. For results of the original loss see Figure~\ref{fig:resnet110_nobn}.
We can see that AdamW has almost the same performance across the range of initial learning rates and weight decay parameters, and most importantly, the best values of these two hyperparameters remain the same. This verifies that, even when employing a (small) non-zero $\epsilon$, AdamW is still approximately scale-free. In contrast, Adam-$\ell_2$ is not scale-free and we can see that its behavior varies drastically with the best initial learning rates and weight decay parameters in each setting totally different.

\begin{figure}
\centering
\begin{subfigure}{0.48\linewidth}
\centering
\includegraphics[width=\linewidth]{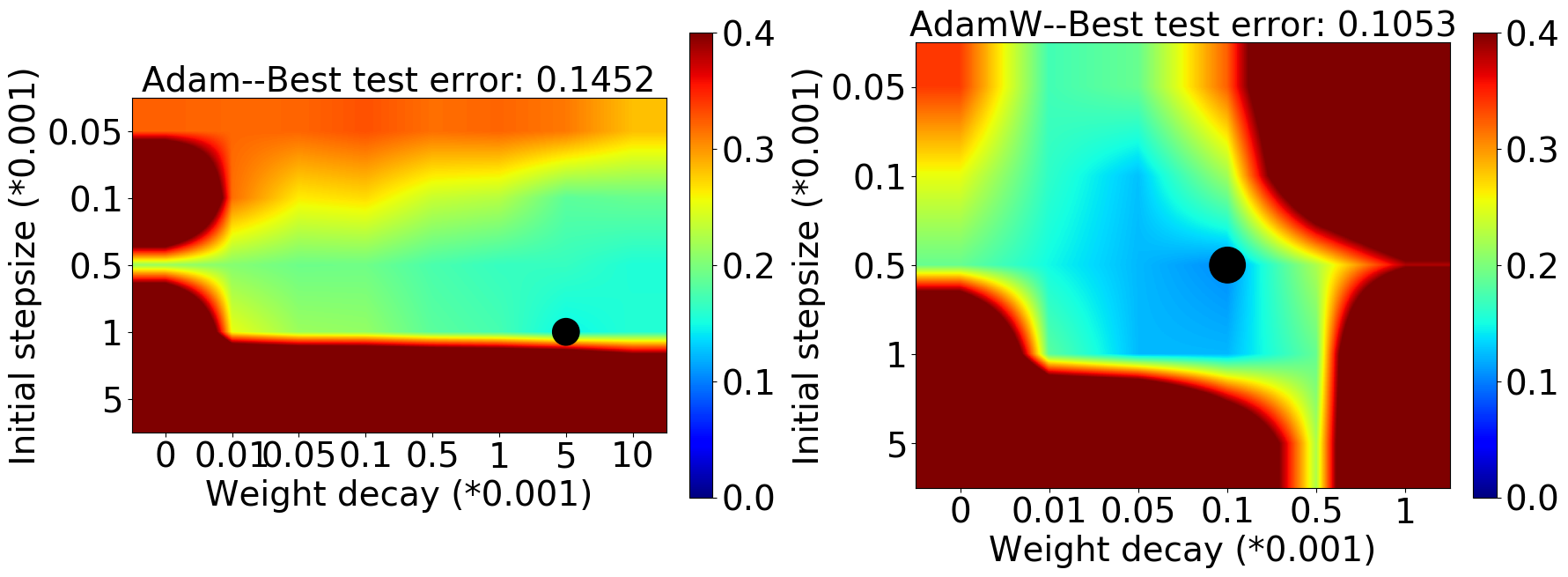}
\caption{Loss multiplied by 10}
\label{fig:lossmul10}
\end{subfigure}
\hspace{0.02\linewidth}
\begin{subfigure}{0.48\linewidth}
\centering
\includegraphics[width=\linewidth]{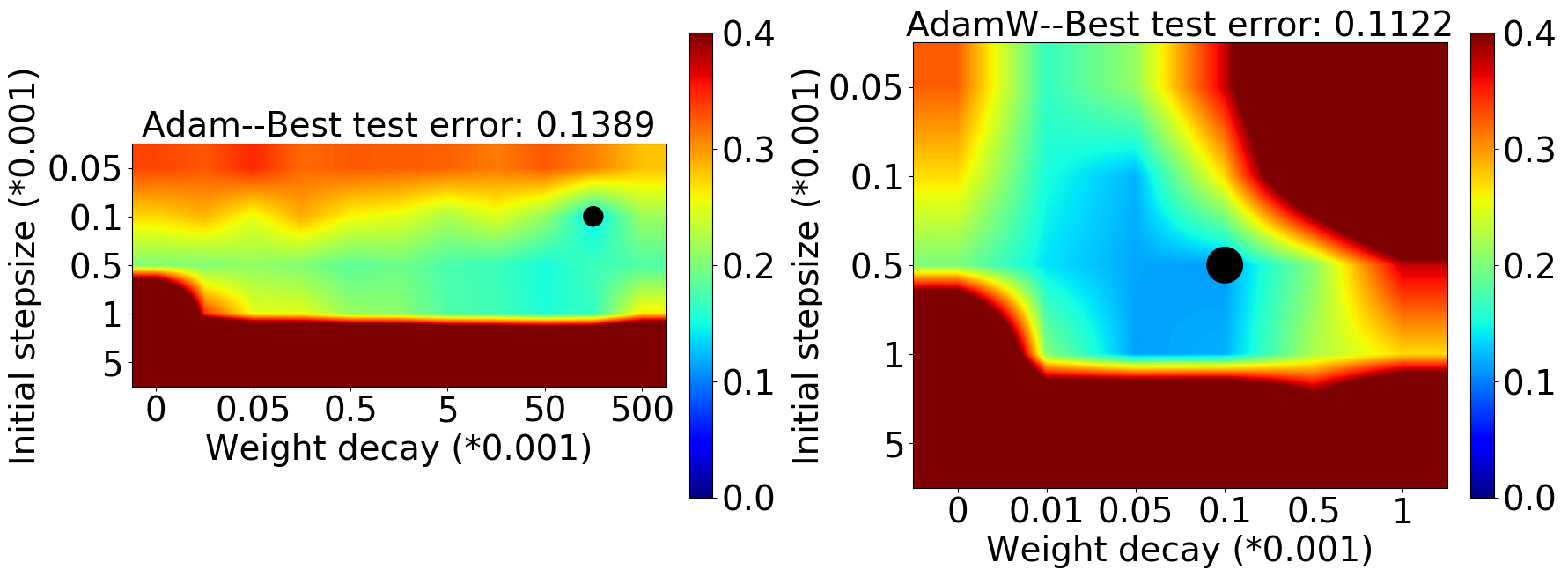}
\caption{Loss multiplied by 100}
\label{fig:lossmul100}
\end{subfigure}
\caption{The final top-1 test error of AdamW vs.~Adam-$\ell_2$ on optimizing a 110-layer Resnet with BN \emph{removed} on CIFAR-10 with the loss function multiplied by 10 (left two figures) and 100 (right two figures).}
\label{fig:lossmul}
\end{figure}

\begin{figure}[t]
\centering
\begin{subfigure}{0.48\linewidth}
\centering
\includegraphics[width=\linewidth]{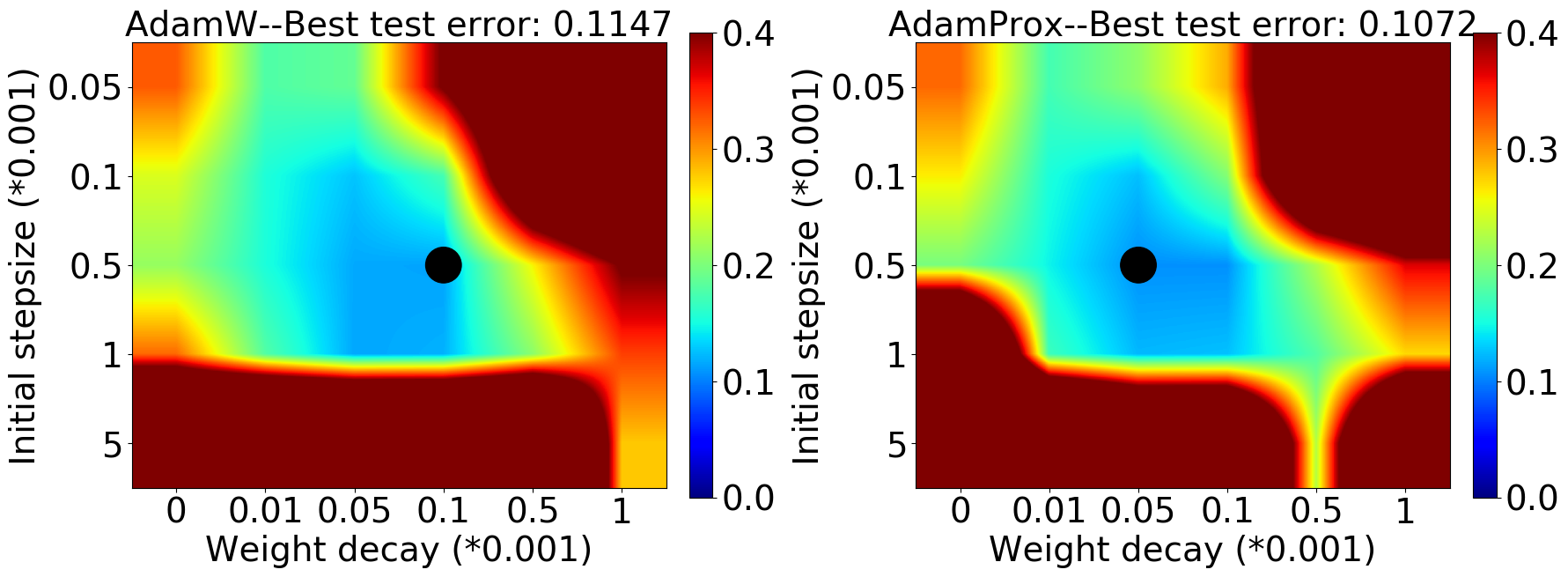}
\caption{ResNet on CIFAR-10}
\label{fig:adamprox1}
\end{subfigure}
\hspace{0.02\linewidth}
\begin{subfigure}{0.48\linewidth}
\centering
\includegraphics[width=\linewidth]{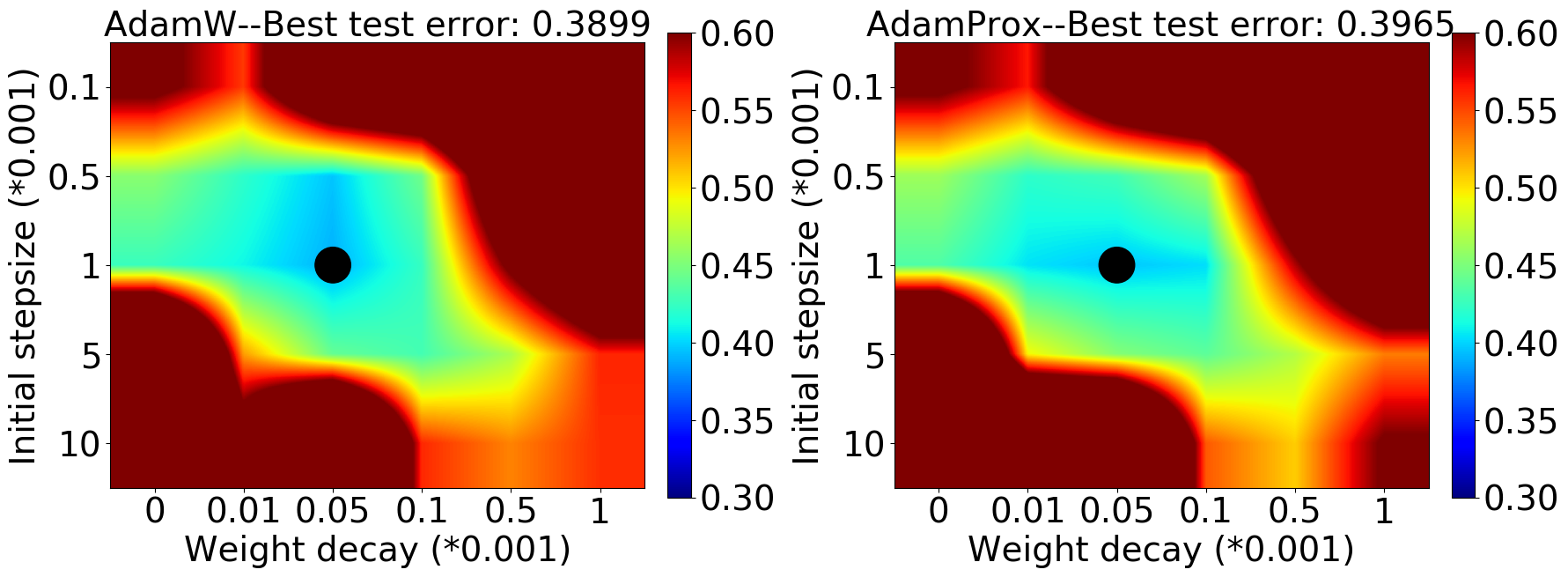}
\caption{DenseNet-BC on CIFAR-100}
\label{fig:adamprox2}
\end{subfigure}
\caption{The final Top-1 test error of using AdamW vs.~AdamProx on training (\emph{the black circle denotes the best setting}). (Top row) a 110-layer ResNet with BN \emph{removed} on CIFAR-10 (trained for 300 epochs). (Bottom row) a 100-layer DenseNet-BC with BN \emph{removed} on CIFAR-100 (trained for 100 epochs).}
\label{fig:adamadamprox}
\end{figure}

\subsection{AdamW and AdamProx Behave very Similarly}
\label{ssec:adamproxsimilarity}

In Section~\ref{sec:theory}, we showed theoretically that AdamW is the first order Taylor approximation of AdamProx (update rule~\eqref{eq:adamprox}).
Beyond this theoretical argument, here we verify empirically that the approximation is good. In Figure~\ref{fig:adamadamprox}, we consider the case when $\eta_t$ = 1 for all $t$ - a relatively large constant learning rate schedule. In such cases, AdamW and AdamProx still behave very similarly. This suggests that for most learning rate schedules, e.g., cosine, exponential, polynomial, and step decay, which all monotonously decrease from $\eta_0 = 1$, AdamProx will remain a very good approximation to AdamW. Thus, it is reasonable to use the more classically-linked AdamProx to try to understand AdamW.

\section{Conclusion and Future Works}
\label{sec:conclusion}
In this paper, we provide insights for understanding the merits of AdamW from two points of view. We first show that AdamW is an approximation of the proximal updates both theoretically and empirically. We then identify the setting of training very deep neural networks without batch normalization in which AdamW substantially outperforms Adam-$\ell_2$ in both training and testing and show its correlation with the scale-freeness property of AdamW. Nevertheless, we are aware of some limitations of this work as well as many directions worth exploring.

\noindent\textbf{Limitations}
First, we only focus on investigating the effects of scale-freeness on Adam-$\ell_2$ and AdamW, but it would be interesting to study scale-freeness more generally.
Also, what we showed is just a correlation instead of causality thus we did not rule out other possible causes beyond scale-freeness for the success of AdamW. Indeed, rigorously proving causality for any such claim is extremely difficult - even in the hard sciences.
Note that there are papers claiming that adaptive updates have worse generalization~\citep{WilsonRSSR17}; however, such claims have been recently partly confuted~\citep[see, e.g.,][]{AgarwalAHKZ20}.
On this note, despite its empirical success, we stress that Adam will not even converge on some convex functions~\citep{ReddiKK18}, thus making it hard to prove formal theoretical convergence and/or generalization guarantees.

\begin{wrapfigure}{r}{0.55\textwidth}
\centering
\vspace{-1.5em}
\includegraphics[width=\linewidth]{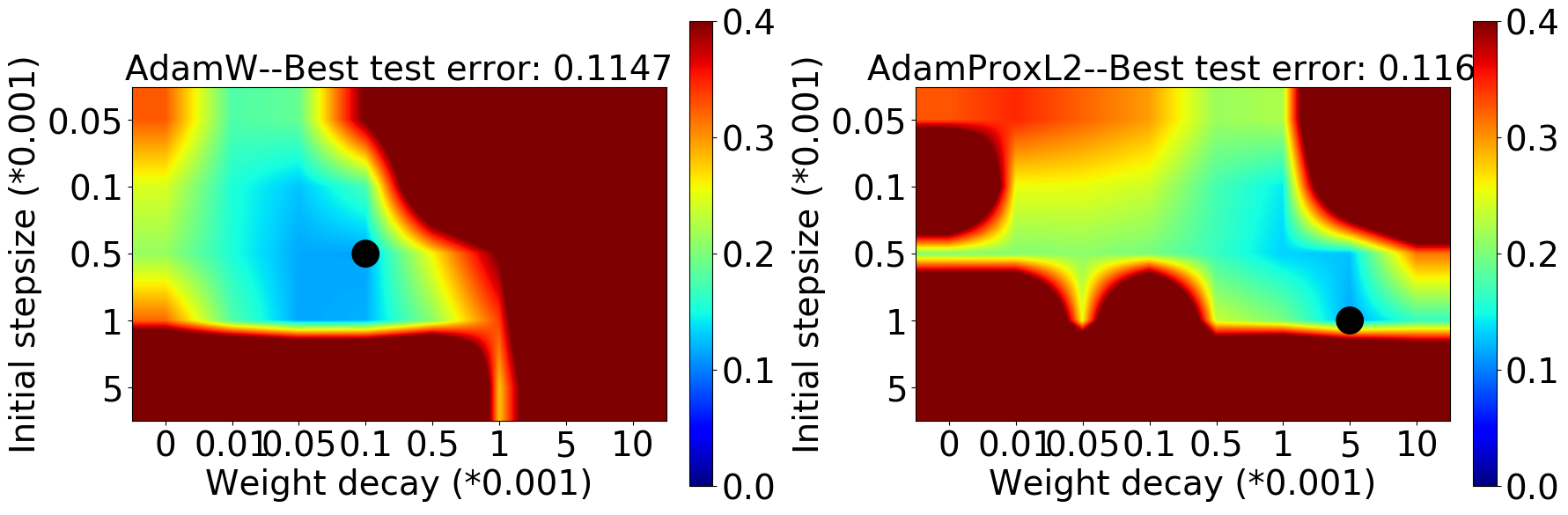}
\caption{Final Top-1 test error of using AdamW vs.~AdamProxL2 to train a 110-layer ResNet \emph{without} BN on CIFAR10 (\emph{the black circle denotes the best setting}).}
\vspace{-1.5em}
\label{fig:adamproxl2}
\end{wrapfigure}
\noindent\textbf{Update for no-square $\ell_2$ regularization.}
Instead of using the squared $\ell_2$ regularization, we might think to use the $\ell_2$ regularization, that is without the square. 
This is known to have better statistical properties than the squared $\ell_2$ \citep{Orabona14}, but it is not smooth so it is harder to be optimized. However, with proximal updates, we don't have to worry about its non-smoothness.
Hence, we can consider the objective function $F(\bx) = \lambda \|\bx\|_2 + f(\bx)$.

The corresponding prox-SGD update was derived in \citet{DuchiS09} for scalar learning rates and it is easy to see that it generalizes to our setting for $M_t = \eta_tI_d$ as
\begin{equation*}
\bx_{t+1}=\max\left(1-\tfrac{\lambda\eta_t}{\|\bx_t-\eta_t\bp_t\|},0\right) (\bx_t - \eta_t \bp_t)~.
\end{equation*}

Its performance, named AdamProxL2, as shown in Figure~\ref{fig:adamproxl2}, can be on a par with AdamW.

\noindent\textbf{Distributed Training} Batch normalization is not user-friendly in distributed training as it requires each machine to collect a batch of statistics to update the model which may be inaccurate when machines do not communicate frequently with each other~\cite{GoyalDGNWKTJH17}.
Since AdamW outperforms Adam-$\ell_2$ significantly in settings without BN, at least in feed-forward neural networks, we can apply AdamW in distributed training to see if it still enjoys the same merits.

\clearpage

\section*{Acknowledgements}
This material is based upon work supported by the National Science Foundation under grants no. 1908111 ``AF: Small: Collaborative Research: New Representations for Learning Algorithms and Secure Computation'' and no. 2046096 ``CAREER: Parameter-free Optimization Algorithms for Machine Learning''.

\bibliography{../../../learning}
\bibliographystyle{plainnat_nourl}

\clearpage
\appendix
{\centerline{\textbf{\huge Appendices}}}

\section{Proof of Theorem~\ref{thm:scalefree}}
\label{sec:proof_scalefree}
\begin{proof}
From the Fundamental Theorem of Calculus we have:
\begin{equation}
\label{eq:base}
\nabla f(\bx)
=\nabla f(\bx^*) + \int^1_0\nabla^2 f(\bx^* + t (\bx - \bx^*))(\bx - \bx^*)dt
=\int^1_0\nabla^2 f(\bx^* + t (\bx - \bx^*))(\bx - \bx^*)dt ~.
\end{equation}
Thus, for any function $\tilde{f}_\Lambda(\bx)$ whose Hessian is $\Lambda\nabla^2 f(\bx)$ and $\nabla\tilde{f}_\Lambda(\bx^*)=0$, we have $\nabla\tilde{f}_\Lambda(\bx) = \Lambda\nabla f(\bx)$.

Now, from the definition of a scale-free algorithm, the iterates of such an algorithm do not change when one multiplies each coordinate of all the gradients by a positive constant. Thus, a scale-free algorithm optimizing $f$ behaves the same as if it is optimizing $\tilde{f}_\Lambda$.
\end{proof}

\section{A Scale-free Algorithm with Dependency on the Condition Number}
\label{sec:restart_adagrad}

\begin{algorithm}[h]
\caption{AdaGrad~\citep{DuchiHS11, McMahanS10} \emph{(All operations on vectors are element-wise.)}}
\label{algo:AdaGrad}
\begin{algorithmic}
\STATE \textbf{Input}: \#Iterations $T$, a set $\mathcal{K}$, $\bx_1\in \mathcal{K}$, stepsize $\eta$
\FOR{$t=1 \ldots T$ }
\STATE {Receive:} $\nabla f(\bx_t)$
\STATE {Set:} $\boldeta_t = \frac{\eta}{\sqrt{\sum^t_{i=1}(\nabla f(\bx_i))^2}}$
\STATE {Update:}
$\bx_{t+1}= \Pi_{\mathcal{K}}\left(\bx_{t}-\boldeta_t\nabla f(\bx_t)\right)$ where $\Pi_{\mathcal{K}}$ is the projection onto $\mathcal{K}$.
\ENDFOR
\STATE {Output:} $\bar{\bx} = \frac1T\sum^T_{t=1}\bx_t$.
\end{algorithmic}
\end{algorithm}

\begin{algorithm}[h]
\caption{AdaGrad with Restart}
\label{algo:restart_AdaGrad}
\begin{algorithmic}
\STATE \textbf{Input}: {\small{\#Rounds $N$, $\bx_0\in\R^d$, upper bound on $\|\bx_0 - \bx^*\|_{\infty}$ as $D_{\infty}$, strong convexity $\mu$, smoothness $M$}}
\STATE {Set}: $\bar{\bx}_0 = \bx_0$
\FOR{$i=1 \ldots N$ }
\STATE {{\small{Run Algorithm~\ref{algo:AdaGrad} to get $\bar{\bx}_{i}$ with $T=32d\frac{M}{\mu}$, $\bx_1 = \bar{\bx}_{i-1}$, $\mathcal{K} = \{\bx: \|\bx - \bar{\bx}_{i-1}\|_{\infty}^2\le \frac{D_{\infty}^2}{4^{i-1}}\}$, $\eta = \frac{D_{\infty}/\sqrt{2}}{2^{i-1}}$}}}
\ENDFOR
\STATE {Output:} $\bar{\bx}_{N}$.
\end{algorithmic}
\end{algorithm}

\begin{theorem}\label{thm:AdaGrad}
Let $\mathcal{K}$ be a hypercube with $\|\bx - \by\|_{\infty} \le D_{\infty}$ for any $\bx, \by\in\mathcal{K}$. For a convex function $f$, set $\eta=\frac{D_{\infty}}{\sqrt{2}}$, then Algorithm~\ref{algo:AdaGrad} guarantees for any $\bx\in\mathcal{K}$:
{\small{\begin{equation}
\sum_{t=1}^T f(\bx_t) - f(\bx) \le \sqrt{2dD_{\infty}^2\sum_{t=1}^T \|\nabla f(\bx_t)\|^2}~.
\label{eq:adagrad}
\end{equation}}}
\end{theorem}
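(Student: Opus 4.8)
The plan is to reduce the statement to the standard regret analysis of online (sub)gradient descent with coordinate-wise adaptive step sizes, carried out one coordinate at a time and then recombined. First I would invoke convexity of $f$ to linearize: for any $\bx \in \mathcal{K}$ we have $f(\bx_t) - f(\bx) \le \langle \nabla f(\bx_t), \bx_t - \bx\rangle$, so it suffices to bound the linearized regret $\sum_{t=1}^T \langle \nabla f(\bx_t), \bx_t - \bx\rangle$. Since both the update $\bx_{t+1} = \Pi_{\mathcal{K}}(\bx_t - \boldeta_t \nabla f(\bx_t))$ and the adaptive step size $\boldeta_t = \eta / \sqrt{\sum_{i=1}^t (\nabla f(\bx_i))^2}$ act coordinate-wise, the analysis decouples across coordinates $j \in \{1,\dots,d\}$. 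I would therefore treat each coordinate as a one-dimensional instance with scalars $x_{t,j}$, $g_{t,j} = (\nabla f(\bx_t))_j$, and step size $\eta_{t,j} = \eta/\sqrt{\sum_{i\le t} g_{i,j}^2}$.

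For each coordinate, the core inequality comes from the non-expansiveness of the Euclidean projection onto the convex set $\mathcal{K}$: expanding $(x_{t+1,j}-x_j)^2 \le (x_{t,j} - \eta_{t,j} g_{t,j} - x_j)^2$ and rearranging yields $g_{t,j}(x_{t,j}-x_j) \le \tfrac{(x_{t,j}-x_j)^2 - (x_{t+1,j}-x_j)^2}{2\eta_{t,j}} + \tfrac{\eta_{t,j}}{2}g_{t,j}^2$. Summing over $t$, the first term telescopes: because $1/\eta_{t,j}$ is non-decreasing in $t$ and $(x_{t,j}-x_j)^2 \le D_{\infty}^2$ for all $t$ (both points lie in $\mathcal{K}$, whose $\ell_\infty$-diameter is $D_{\infty}$), this sum is at most $\tfrac{D_{\infty}^2}{2\eta_{T,j}} = \tfrac{D_{\infty}^2}{2\eta}\sqrt{\sum_{t} g_{t,j}^2}$. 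The second term is controlled by the standard inequality $\sum_{t=1}^T g_{t,j}^2 / \sqrt{\sum_{i\le t} g_{i,j}^2} \le 2\sqrt{\sum_t g_{t,j}^2}$, giving at most $\eta \sqrt{\sum_t g_{t,j}^2}$.

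Combining the two terms and substituting $\eta = D_{\infty}/\sqrt{2}$ makes the constant $\tfrac{D_{\infty}^2}{2\eta} + \eta = \sqrt{2}\,D_{\infty}$, so each coordinate contributes $\sum_t g_{t,j}(x_{t,j}-x_j) \le \sqrt{2}\,D_{\infty}\sqrt{\sum_t g_{t,j}^2}$. Summing over $j$ and applying Cauchy--Schwarz, $\sum_j \sqrt{\sum_t g_{t,j}^2} \le \sqrt{d \sum_j \sum_t g_{t,j}^2} = \sqrt{d\sum_t \|\nabla f(\bx_t)\|^2}$, yields $\sum_t \langle \nabla f(\bx_t), \bx_t-\bx\rangle \le \sqrt{2dD_{\infty}^2 \sum_t \|\nabla f(\bx_t)\|^2}$, which combined with the convexity step is exactly the claim.

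I expect the main obstacle to be the per-coordinate time summation with a time-varying step size: collapsing the telescoping term cleanly requires using monotonicity of $1/\eta_{t,j}$ together with the uniform diameter bound, and the $\tfrac{\eta_{t,j}}{2}g_{t,j}^2$ term requires the ``sum of ratios'' lemma, whose justification is itself a short induction / integral-comparison argument. By contrast, the recombination across coordinates via Cauchy--Schwarz (which is where the $\sqrt{d}$ factor and the clean $\sum_t\|\nabla f(\bx_t)\|^2$ form emerge) is routine once the per-coordinate bound is established.
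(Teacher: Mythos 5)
Your proposal is correct and follows essentially the same route as the paper's proof: linearize by convexity, decouple per coordinate (using that projection onto a hypercube acts coordinate-wise), telescope the distance terms via monotonicity of $1/\eta_{t,j}$ and the $\ell_\infty$ diameter bound, control the step-size terms with the standard ``sum of ratios'' lemma (Lemma 5 of McMahan--Streeter in the paper), and recombine across coordinates with Cauchy--Schwarz (the paper phrases this as concavity of $\sqrt{\cdot}$). No gaps to report.
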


\begin{theorem}
\label{thm:restart_adagrad}
For a $\mu$ strongly convex and $M$ smooth function $f$, denote its unique minimizer as $\bx^*\in\R^d$. Given $\bx_0\in\R^d$, assume that $\|\bx_0 - \bx^*\|_{\infty} \le D_{\infty}$, then Algorithm~\ref{algo:restart_AdaGrad} guarantees:
\begin{equation}
\|\bar{\bx}_{N} - \bx^*\|_{\infty}^2 \le \frac{D_{\infty}^2}{4^N}~.
\end{equation}
Thus, to get a $\bx$ such that $\|\bx - \bx^*\|_{\infty}^2\le\epsilon$, we need at most $32d\frac{M}{\mu}\log_4\left({D_{\infty}^2}/{\epsilon}\right)$ gradient calls.
\end{theorem}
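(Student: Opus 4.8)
The plan is to establish the distance bound $\|\bar{\bx}_i - \bx^*\|_\infty^2 \le D_\infty^2/4^i$ by induction on the round index $i$, and then read off the gradient-call complexity. The base case $i=0$ is immediate, since $\bar{\bx}_0 = \bx_0$ and $\|\bx_0 - \bx^*\|_\infty \le D_\infty$ by assumption. For the inductive step I would assume $\|\bar{\bx}_{i-1} - \bx^*\|_\infty^2 \le D_\infty^2/4^{i-1}$ and analyze the $i$-th invocation of Algorithm~\ref{algo:AdaGrad}, whose starting point is $\bx_1 = \bar{\bx}_{i-1}$ and whose feasible set is $\mathcal{K}_i = \{\bx : \|\bx - \bar{\bx}_{i-1}\|_\infty^2 \le D_\infty^2/4^{i-1}\}$.

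The first key observation is that the inductive hypothesis places $\bx^*$ inside $\mathcal{K}_i$, so $\bx^*$ is a legitimate comparator for the regret bound of Theorem~\ref{thm:AdaGrad}. Applying that bound with comparator $\bx^*$ (the relevant $\ell_\infty$ distance being the radius $D_\infty/2^{i-1}$ of $\mathcal{K}_i$, which is matched to the choice of $\eta$ in round $i$) gives $\sum_{t=1}^T (f(\bx_t) - f(\bx^*)) \le \sqrt{2 d (D_\infty/2^{i-1})^2 \sum_{t=1}^T \|\nabla f(\bx_t)\|^2}$. The main trick is then a self-bounding argument: writing $R_i := \sum_{t=1}^T (f(\bx_t) - f(\bx^*))$ and using that an $M$-smooth function with minimizer $\bx^*$ satisfies $\|\nabla f(\bx_t)\|^2 \le 2M(f(\bx_t) - f(\bx^*))$, I would bound $\sum_t \|\nabla f(\bx_t)\|^2 \le 2M R_i$. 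Substituting into the regret bound produces an inequality of the form $R_i \le \sqrt{c\, R_i}$ with $c = 4 d M D_\infty^2/4^{i-1}$; squaring and dividing by $R_i$ collapses this to the clean bound $R_i \le c$.

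From there, convexity and Jensen's inequality give $f(\bar{\bx}_i) - f(\bx^*) \le R_i/T$, while $\mu$-strong convexity together with $\|\cdot\|_2 \ge \|\cdot\|_\infty$ gives $\tfrac{\mu}{2}\|\bar{\bx}_i - \bx^*\|_\infty^2 \le f(\bar{\bx}_i) - f(\bx^*)$. Chaining these inequalities and substituting $T = 32 d M/\mu$ is exactly calibrated so that the constants telescope into the factor $1/4$ shrinkage $\|\bar{\bx}_i - \bx^*\|_\infty^2 \le D_\infty^2/4^i$, completing the induction. Unrolling the recursion yields $\|\bar{\bx}_N - \bx^*\|_\infty^2 \le D_\infty^2/4^N$; choosing $N = \lceil \log_4(D_\infty^2/\epsilon) \rceil$ forces the right-hand side below $\epsilon$, and since each of the $N$ rounds issues $T = 32 d M/\mu$ gradient queries, the total is $32 d \tfrac{M}{\mu}\log_4(D_\infty^2/\epsilon)$, as claimed.

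The main obstacle I anticipate is the bookkeeping of constants in the self-bounding step — in particular, being careful about whether the relevant $\ell_\infty$ distance from the iterates to $\bx^*$ is taken as the radius $D_\infty/2^{i-1}$ of $\mathcal{K}_i$ or its full diameter, since this choice shifts the required value of $T$ by a constant factor. Verifying that the $\eta$ prescribed in each round of Algorithm~\ref{algo:restart_AdaGrad} is consistent with the value demanded by Theorem~\ref{thm:AdaGrad} is what makes the regret bound applicable verbatim and the constant $T = 32 d M/\mu$ come out exactly.
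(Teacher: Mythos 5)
Your proposal follows essentially the same route as the paper's proof: apply Theorem~\ref{thm:AdaGrad} with comparator $\bx^*$ on each round, use $M$-smoothness for the self-bounding step $\sum_t\|\nabla f(\bx_t)\|^2 \le 2M\sum_t(f(\bx_t)-f(\bx^*))$ to collapse the regret bound, then combine Jensen with $\mu$-strong convexity and $T=32dM/\mu$ to obtain the factor-$1/4$ contraction per round. The radius-versus-diameter bookkeeping you flag is a real (constant-factor) looseness, but the paper's own proof elides it in exactly the same way, so your argument matches theirs in both structure and level of rigor.
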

\begin{proof}[Proof of Theorem~\ref{thm:restart_adagrad}]
Consider round $i$ and assume $\mathcal{K}$ passed to Algorithm~\ref{algo:AdaGrad} is bounded w.r.t.~$\ell_{\infty}$ norm by $D_{\infty_i}$. When $f$ is $\mu$-strongly convex and $M$ smooth, let $\bx = \bx^*$, Equation~\eqref{eq:adagrad} becomes:
{\small{\begin{equation}
\sum_{t=1}^T f(\bx_t) - f(\bx^*)
\le
\sqrt{2dD_{\infty_{i}}^2\sum_{t=1}^T \|\nabla f(\bx_t)\|^2}
\le
\sqrt{4MdD_{\infty_{i}}^2\sum_{t=1}^T(f(\bx_t) - f(\bx^*))}~,
\end{equation}}}
where the second inequality is by the $M$ smoothness of $f$. This gives:
{\small{\begin{equation}
\sum_{t=1}^T f(\bx_t) - f(\bx^*)
\le
4MdD_{\infty_{i}}^2~.
\end{equation}}}
Let $\bar{\bx}_i = \frac1T\sum^T_{t=1}\bx_t$ we have by the $\mu$-strong-convexity that:
{\small{\begin{equation}
\|\bar{\bx}_i - \bx^*\|_{\infty}^2
\le
\|\bar{\bx}_i - \bx^*\|^2
\le
\frac{2}{\mu}(f(\bar{\bx}) - f(\bx^*))
\le
\frac{2}{\mu}\frac1T\sum^T_{t=1}(f(\bx_t) - f(\bx^*))
\le
\frac{8MdD_{\infty_{i}}^2}{\mu T}~.
\label{eq:shrink}
\end{equation}}}

Put $T=32d\frac{M}{\mu}$ in Equation~\eqref{eq:shrink} we have that $\|\bar{\bx}_i - \bx^*\|_{\infty}^2 \le \frac{D_{\infty_{i}}^2}{4}$. Thus, after each round, the $\ell_{\infty}$ distance between the update $\bar{\bx}_i$ and $\bx^*$ is shrinked by half, which in turn ensures that $\bx^*$ is still inside the $\mathcal{K}$ passed to Algorithm~\ref{algo:AdaGrad} in the next round with $D_{\infty_{i+1}} = \frac{D_{\infty_{i}}}{2}$. This concludes the proof.
\end{proof}

\begin{proof}[Proof of Theorem~\ref{thm:AdaGrad}]
{\allowdisplaybreaks
\begin{align}
&\sum^T_{t=1}f(\bx_t) - f(\bx)\\
&\le
\sum^T_{t=1}\langle \nabla f(\bx_t), \bx_t - \bx \rangle\\
&=
\sum^T_{t=1}\sum^d_{j=1}\frac{\partial f}{\partial x_{t,j}}(\bx_t)*(x_{t,j} - x_j)\\
&=
\sum^T_{t=1}\sum^d_{j=1}\frac{(x_{t, j} - x_{j})^2 - \left(x_{t, j} - \eta_{t, j}\frac{\partial f}{\partial x_{t,j}}(\bx_t) - x_{j}\right)^2}{2\eta_{t,j}}
+
\sum^T_{t=1}\sum^d_{j=1}\frac{\eta_{t,j}}{2}\left(\frac{\partial f}{\partial x_{t,j}}(\bx_t)\right)^2\\
&\le
\sum^T_{t=1}\sum^d_{j=1}\frac{(x_{t, j} - x_{j})^2 - (x_{t+1, j} - x_{j})^2}{2\eta_{t,j}}
+
\sum^T_{t=1}\sum^d_{j=1}\frac{\eta_{t,j}}{2}\left(\frac{\partial f}{\partial x_{t,j}}(\bx_t)\right)^2\\
&\le
\sum^d_{j=1}\sum^T_{t=1}\frac{(x_{t, j} - x_j)^2}{2}\left(\frac{1}{\eta_{t,j}} - \frac{1}{\eta_{t-1, j}}\right)
+
\sum^d_{j=1}\sum^T_{t=1}\frac{\eta_{t,j}}{2}\left(\frac{\partial f}{\partial x_{t,j}}(\bx_t)\right)^2\\
&\le
\frac{D_{\infty}^2}{2\eta}\sum^d_{j=1}\sum^T_{t=1}\left(\sqrt{\sum^t_{i=1}\left(\frac{\partial f}{\partial x_{i,j}}(\bx_i)\right)^2} - \sqrt{\sum^{t-1}_{i=1}\left(\frac{\partial f}{\partial x_{i,j}}(\bx_i)\right)^2}\right)
+
\sum^d_{j=1}\sum^T_{t=1}\frac{\eta}{2\sqrt{\sum^t_{i=1}\left(\frac{\partial f}{\partial x_{i,j}}(\bx_i)\right)^2}}\left(\frac{\partial f}{\partial x_{t,j}}(\bx_t)\right)^2\\
&\le
\sum^d_{j=1}\left(\frac{D_{\infty}^2}{2\eta}\sqrt{\sum^T_{t=1}\left(\frac{\partial f}{\partial x_{t,j}}(\bx_t)\right)^2}
+
\eta\sqrt{\sum^T_{t=1}\left(\frac{\partial f}{\partial x_{t,j}}(\bx_t)\right)^2}\right)\\
&=
\sum^d_{j=1}\sqrt{2D_{\infty}^2\sum^T_{t=1}\left(\frac{\partial f}{\partial x_{t,j}}(\bx_t)\right)^2}\\
&\le
\sqrt{2dD_{\infty}^2\sum^T_{t=1}\sum^d_{j=1}\left(\frac{\partial f}{\partial x_{t,j}}(\bx_t)\right)^2}\\
&=
\sqrt{2dD_{\infty}^2\sum^T_{t=1}\|\nabla f(\bx_t))\|^2}~.
\end{align}}
where the first inequality is by convexity, the second one by the projection lemma as the projection onto a hypercube equals performing the projection independently for each coordinate, the fifth one by Lemma 5 in~\citep{McMahanS10}, and the last one by the concavity of $\sqrt{\cdot}$.
\end{proof}

\section{The Histograms of the Magnitude of each Update Coordinate during the Entire Training Phase}
\label{sec:hist_entire_train}
In this section, we report the histograms of the absolute value of updates of Adam-$\ell_2$ vs.~AdamW of all coordinates divided by $\alpha$ during the whole training process. From the figures shown below, we can clearly see that AdamW's updates remain in a much more concentrated scale range than Adam-$\ell_2$ during the entire training. Moreover, as the depth of the network grows, Adam-$\ell_2$'s updates become more and more dispersed, while AdamW's updates are still concentrated. \emph{(Note that the leftmost bin contains all values equal to or less than $2^{-27}\approx10^{-8.1}$ and the rightmost bin contains all values equal to or larger than $1$.)}

\begin{figure}[t]
\centering
\begin{subfigure}{0.42\textwidth}
\includegraphics[width=\linewidth]{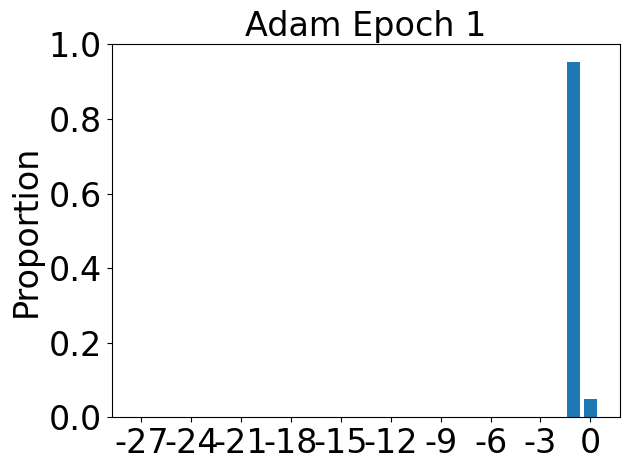}
\includegraphics[width=\linewidth]{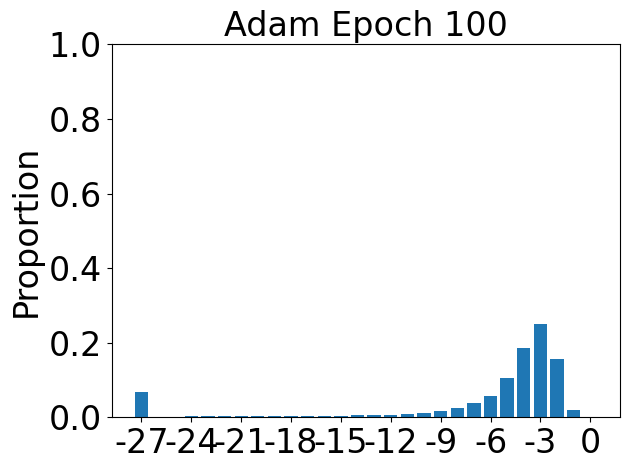}
\includegraphics[width=\linewidth]{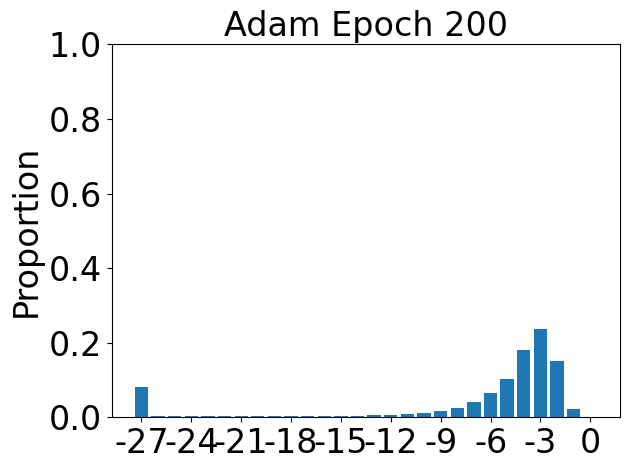}
\includegraphics[width=\linewidth]{figs/histograms_updates/CIFAR10_resnet20_nobn/magnitude_histogram_CIFAR10_resnet20_nobn_Adam_update_no_alpha_Epoch_299.png}
\caption{Adam-$\ell_2$}
\label{fig:resnet20_nobn_adam}
\end{subfigure}
\hspace{0.02\textwidth}
\begin{subfigure}{0.42\textwidth}
\includegraphics[width=\linewidth]{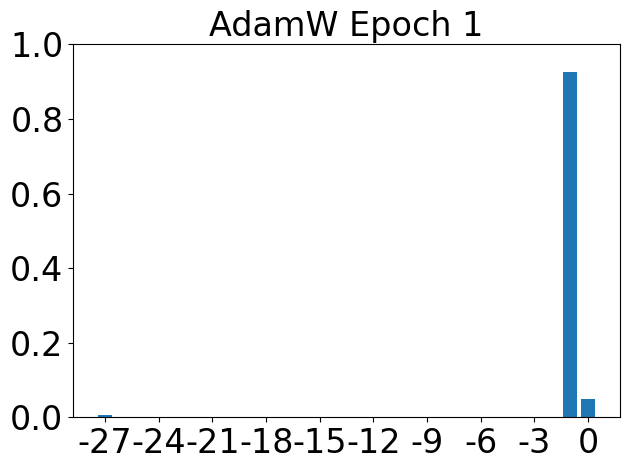}
\includegraphics[width=\linewidth]{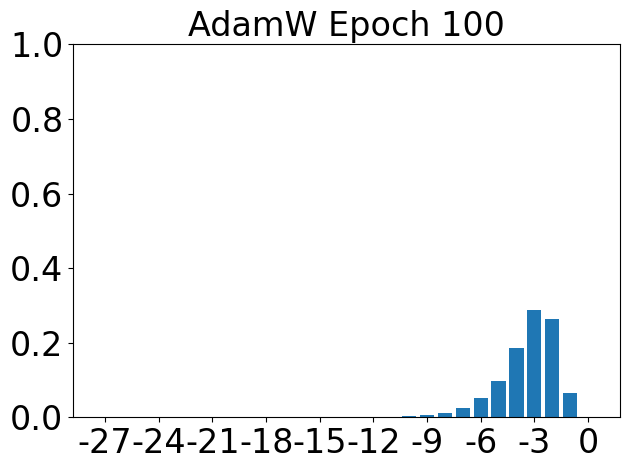}
\includegraphics[width=\linewidth]{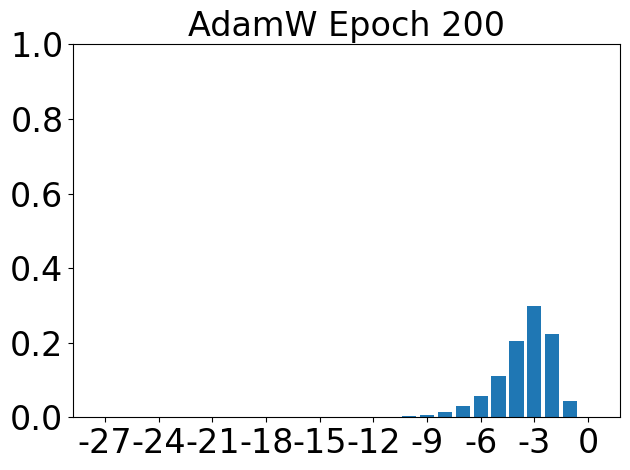}
\includegraphics[width=\linewidth]{figs/histograms_updates/CIFAR10_resnet20_nobn/magnitude_histogram_CIFAR10_resnet20_nobn_AdamW_update_no_alpha_Epoch_299.png}
\caption{AdamW}
\label{fig:resnet20_nobn_adamw}
\end{subfigure}
\caption{The histograms of the magnitudes of all updates (without $\alpha$) of a 20-layer Resnet with BN removed trained by AdamW or Adam-$\ell_2$  on CIFAR10.}
\label{fig:resnet20_nobn_histo}
\end{figure}

\begin{figure}[t]
\centering
\begin{subfigure}{0.42\textwidth}
\includegraphics[width=\linewidth]{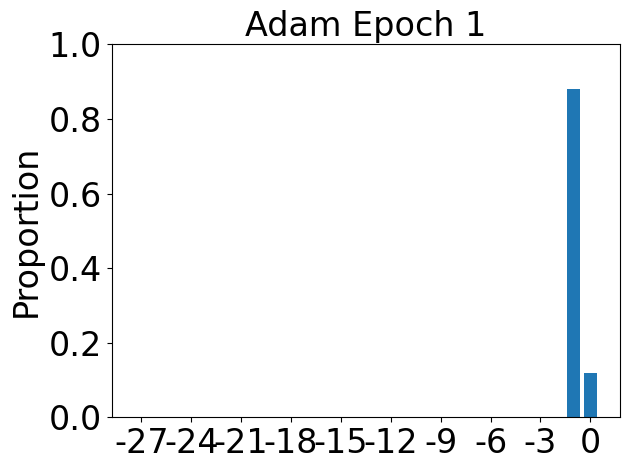}
\includegraphics[width=\linewidth]{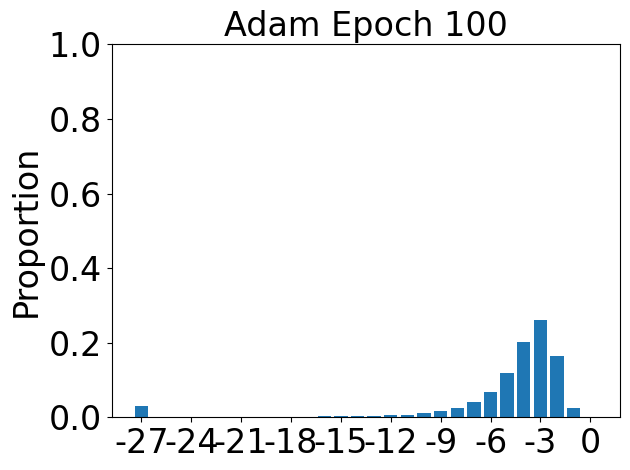}
\includegraphics[width=\linewidth]{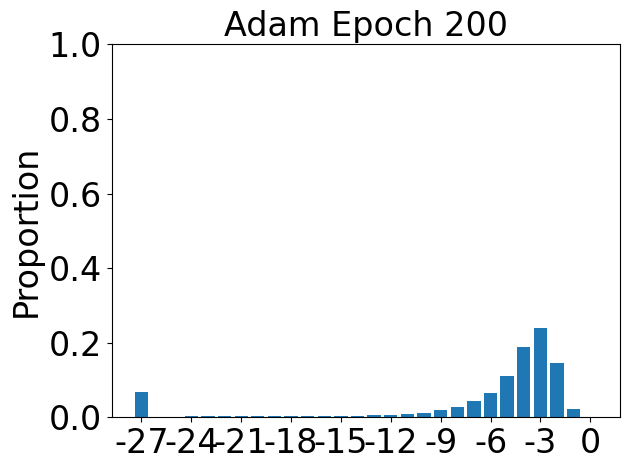}
\includegraphics[width=\linewidth]{figs/histograms_updates/CIFAR10_resnet44_nobn/magnitude_histogram_CIFAR10_resnet44_nobn_Adam_update_no_alpha_Epoch_299.png}
\caption{Adam-$\ell_2$}
\label{fig:resnet44_nobn_adam}
\end{subfigure}
\hspace{0.02\textwidth}
\begin{subfigure}{0.42\textwidth}
\includegraphics[width=\linewidth]{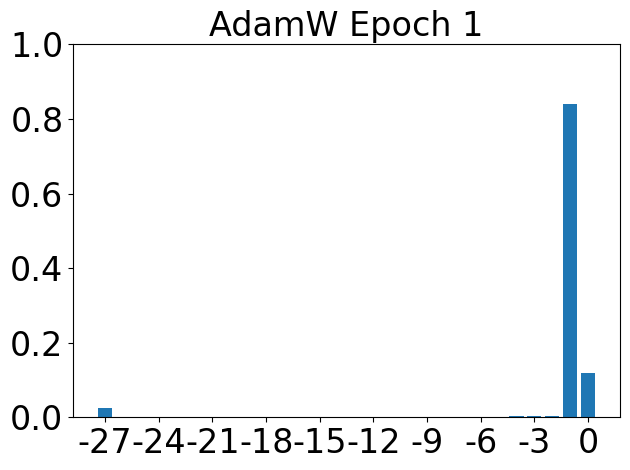}
\includegraphics[width=\linewidth]{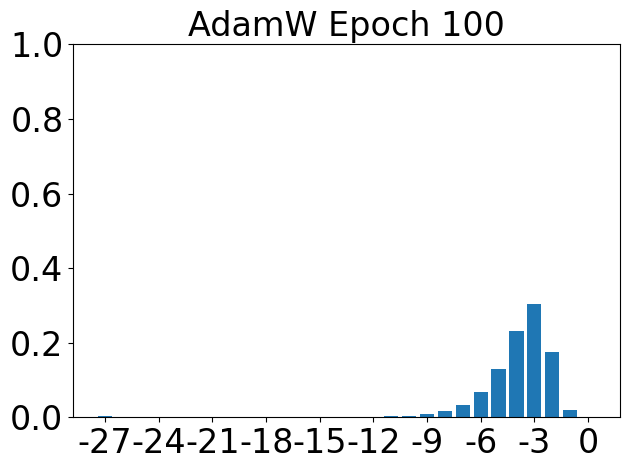}
\includegraphics[width=\linewidth]{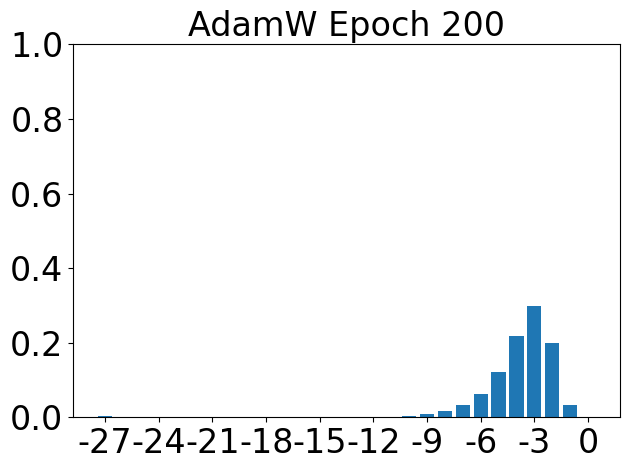}
\includegraphics[width=\linewidth]{figs/histograms_updates/CIFAR10_resnet44_nobn/magnitude_histogram_CIFAR10_resnet44_nobn_AdamW_update_no_alpha_Epoch_299.png}
\caption{AdamW}
\label{fig:resnet44_nobn_adamw}
\end{subfigure}
\caption{The histograms of the magnitudes of all updates (without $\alpha$) of a 44-layer Resnet with BN removed trained by AdamW or Adam-$\ell_2$  on CIFAR10.}
\label{fig:resnet44_nobn_histo}
\end{figure}

\begin{figure}[t]
\centering
\begin{subfigure}{0.42\textwidth}
\includegraphics[width=\linewidth]{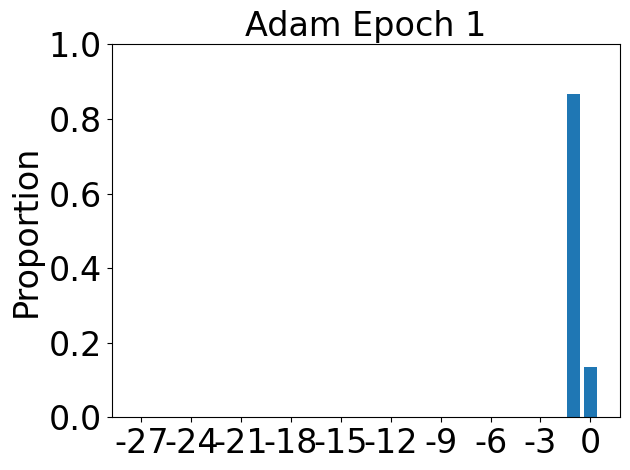}
\includegraphics[width=\linewidth]{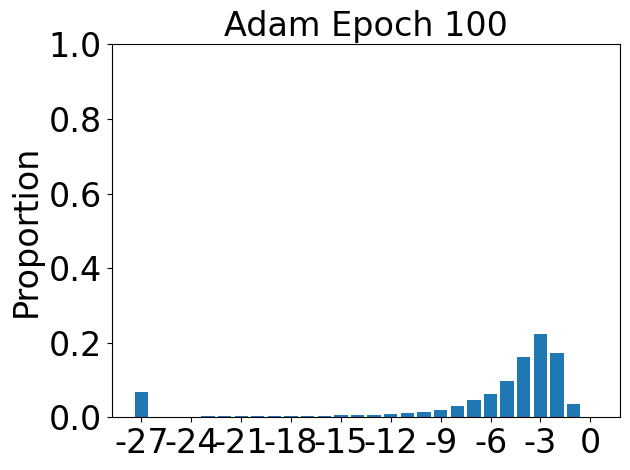}
\includegraphics[width=\linewidth]{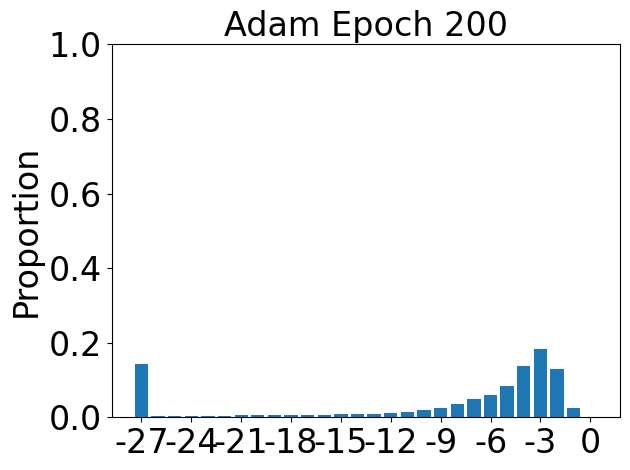}
\includegraphics[width=\linewidth]{figs/histograms_updates/CIFAR10_resnet56_nobn/magnitude_histogram_CIFAR10_resnet56_nobn_Adam_update_no_alpha_Epoch_299.png}
\caption{Adam-$\ell_2$}
\label{fig:resnet56_nobn_adam}
\end{subfigure}
\hspace{0.02\textwidth}
\begin{subfigure}{0.42\textwidth}
\includegraphics[width=\linewidth]{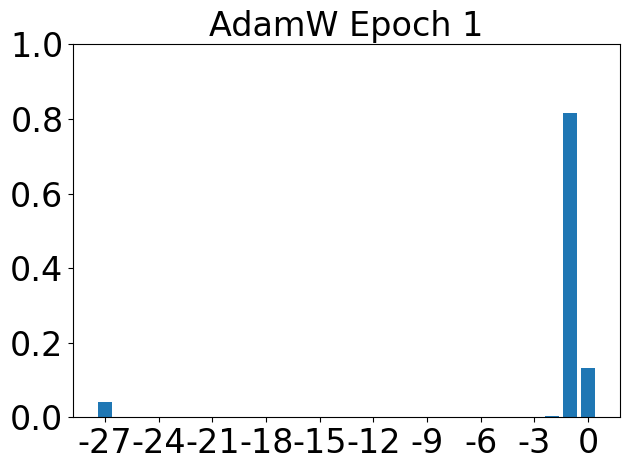}
\includegraphics[width=\linewidth]{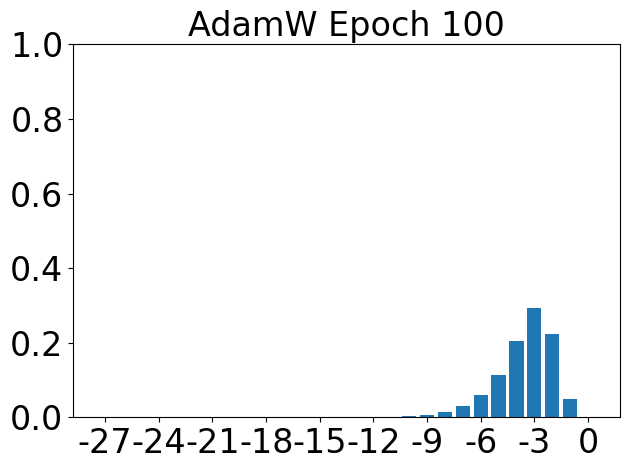}
\includegraphics[width=\linewidth]{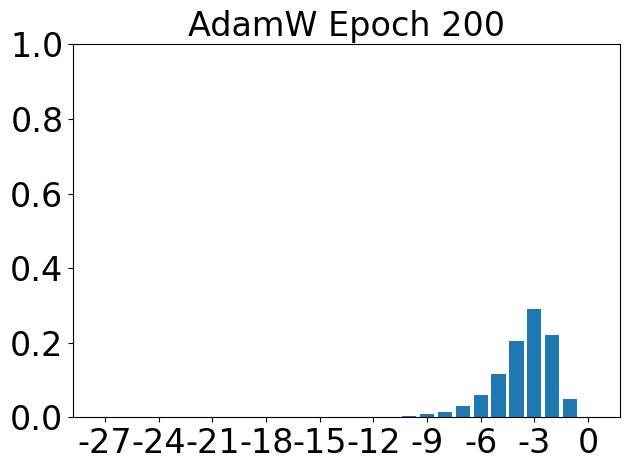}
\includegraphics[width=\linewidth]{figs/histograms_updates/CIFAR10_resnet56_nobn/magnitude_histogram_CIFAR10_resnet56_nobn_AdamW_update_no_alpha_Epoch_299.png}
\caption{AdamW}
\label{fig:resnet56_nobn_adamw}
\end{subfigure}
\caption{The histograms of the magnitudes of all updates (without $\alpha$) of a 56-layer Resnet with BN removed trained by AdamW or Adam-$\ell_2$  on CIFAR10.}
\label{fig:resnet56_nobn_histo}
\end{figure}

\begin{figure}[t]
\centering
\begin{subfigure}{0.42\textwidth}
\includegraphics[width=\linewidth]{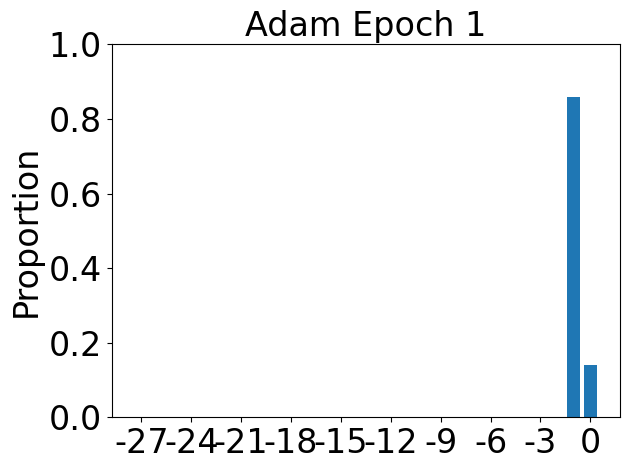}
\includegraphics[width=\linewidth]{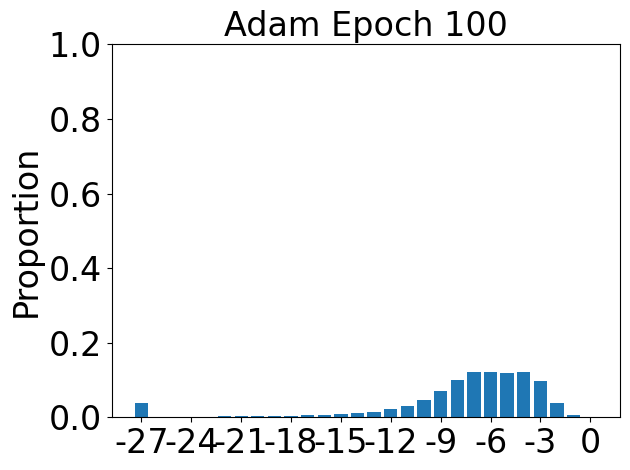}
\includegraphics[width=\linewidth]{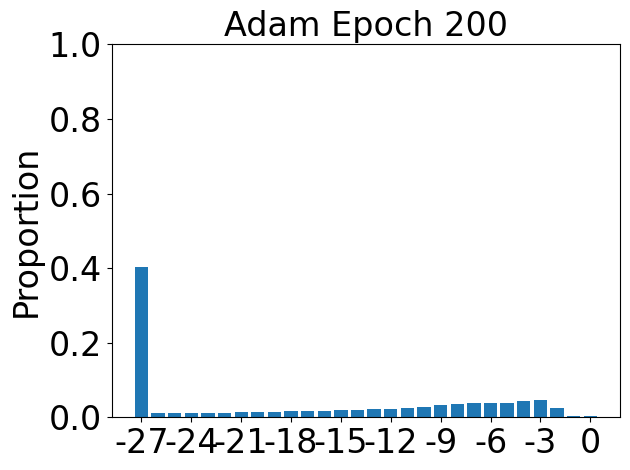}
\includegraphics[width=\linewidth]{figs/histograms_updates/CIFAR10_resnet110_nobn/magnitude_histogram_CIFAR10_resnet110_nobn_Adam_update_no_alpha_Epoch_299.png}
\caption{Adam-$\ell_2$}
\label{fig:resnet110_nobn_adam}
\end{subfigure}
\hspace{0.02\textwidth}
\begin{subfigure}{0.42\textwidth}
\includegraphics[width=\linewidth]{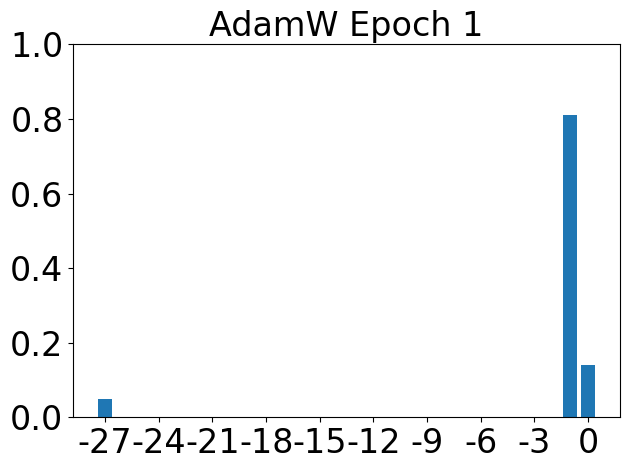}
\includegraphics[width=\linewidth]{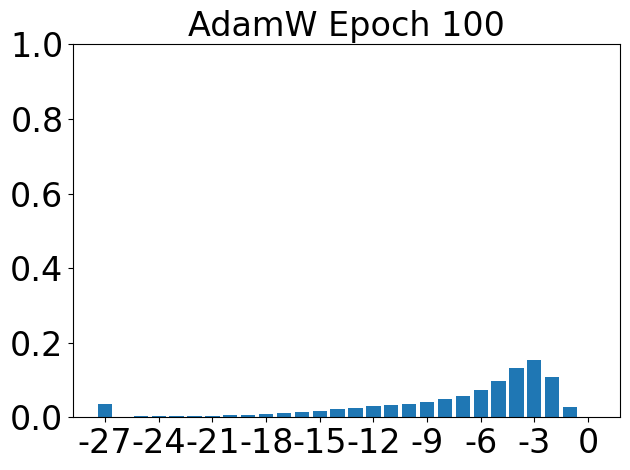}
\includegraphics[width=\linewidth]{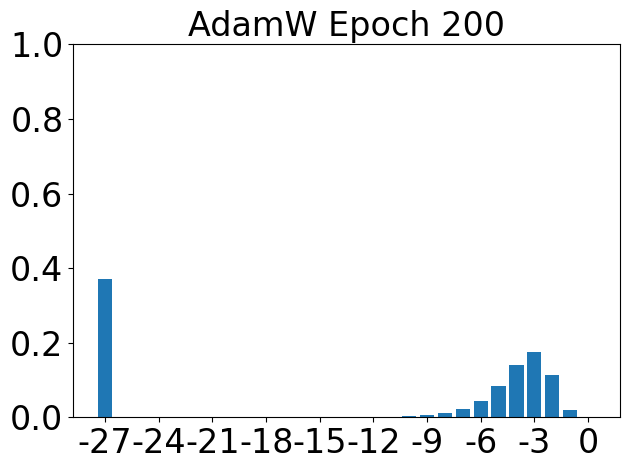}
\includegraphics[width=\linewidth]{figs/histograms_updates/CIFAR10_resnet110_nobn/magnitude_histogram_CIFAR10_resnet110_nobn_AdamW_update_no_alpha_Epoch_299.png}
\caption{AdamW}
\label{fig:resnet110_nobn_adamw}
\end{subfigure}
\caption{The histograms of the magnitudes of all updates (without $\alpha$) of a 110-layer Resnet with BN removed trained by AdamW or Adam-$\ell_2$  on CIFAR10.}
\label{fig:resnet110_nobn_histo}
\end{figure}

\begin{figure}[t]
\centering
\begin{subfigure}{0.42\textwidth}
\includegraphics[width=\linewidth]{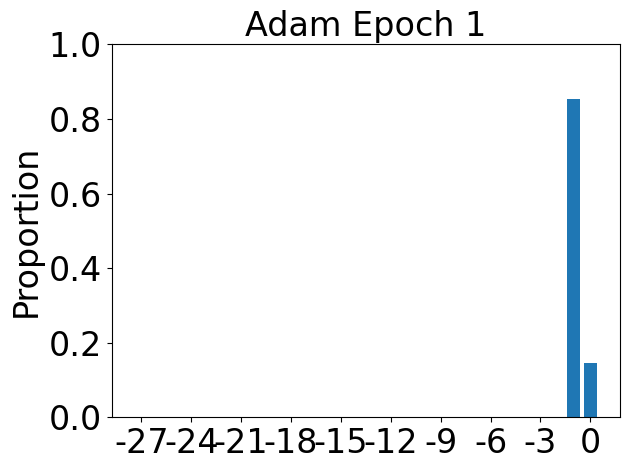}
\includegraphics[width=\linewidth]{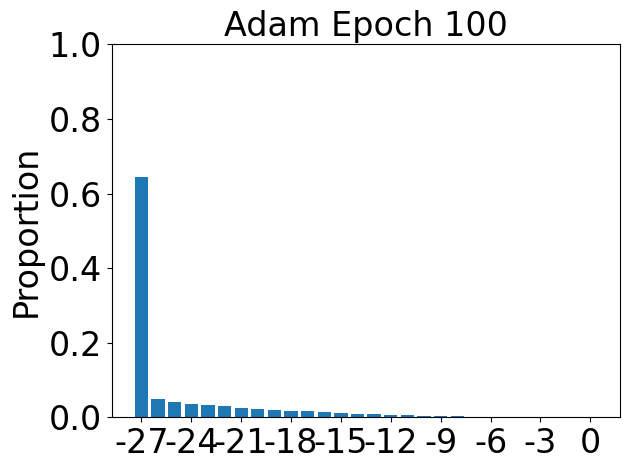}
\includegraphics[width=\linewidth]{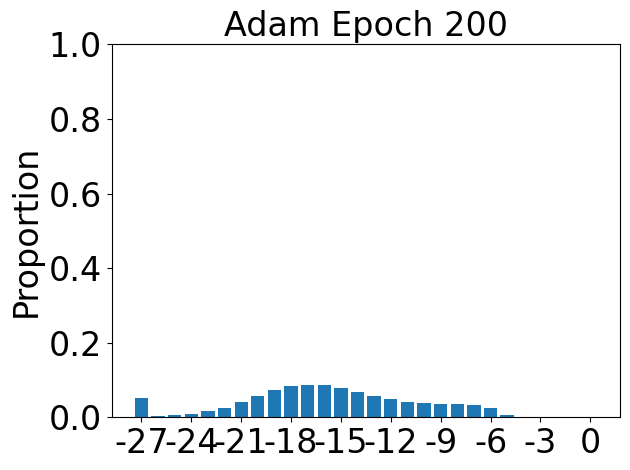}
\includegraphics[width=\linewidth]{figs/histograms_updates/CIFAR10_resnet218_nobn/magnitude_histogram_CIFAR10_resnet218_nobn_Adam_update_no_alpha_Epoch_299.png}
\caption{Adam-$\ell_2$}
\label{fig:resnet218_nobn_adam}
\end{subfigure}
\hspace{0.02\textwidth}
\begin{subfigure}{0.42\textwidth}
\includegraphics[width=\linewidth]{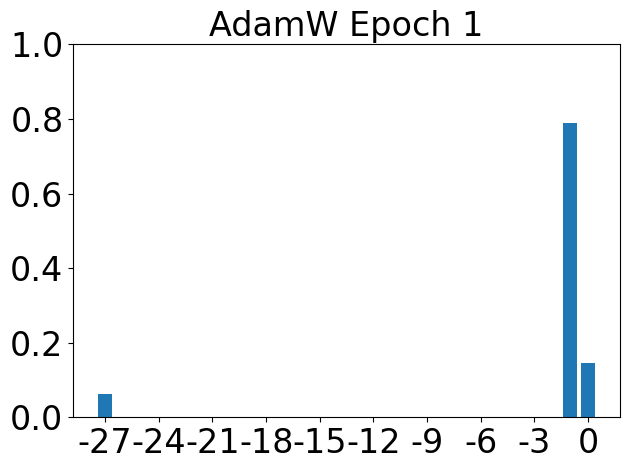}
\includegraphics[width=\linewidth]{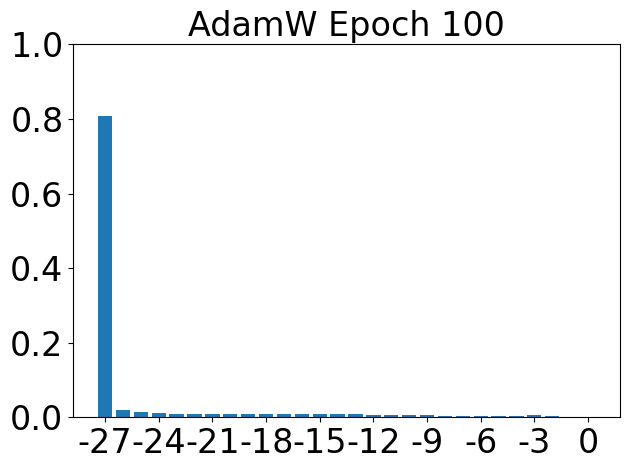}
\includegraphics[width=\linewidth]{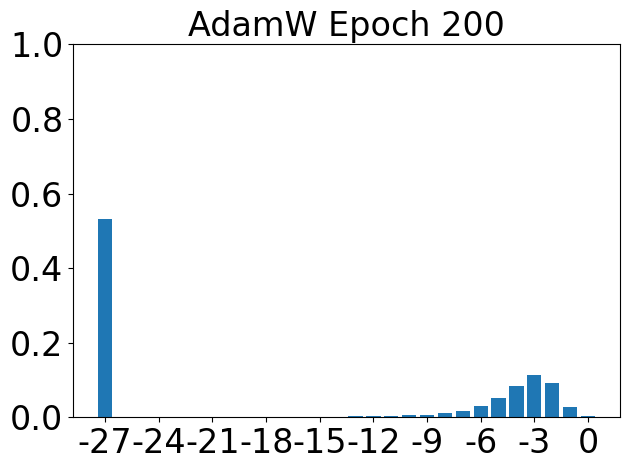}
\includegraphics[width=\linewidth]{figs/histograms_updates/CIFAR10_resnet218_nobn/magnitude_histogram_CIFAR10_resnet218_nobn_AdamW_update_no_alpha_Epoch_299.png}
\caption{AdamW}
\label{fig:resnet218_nobn_adamw}
\end{subfigure}
\caption{The histograms of the magnitudes of all updates (without $\alpha$) of a 218-layer Resnet with BN removed trained by AdamW or Adam-$\ell_2$  on CIFAR10.}
\label{fig:resnet218_nobn_histo}
\end{figure}

\begin{figure}[t]
\centering

\begin{subfigure}{0.42\textwidth}
\includegraphics[width=\linewidth]{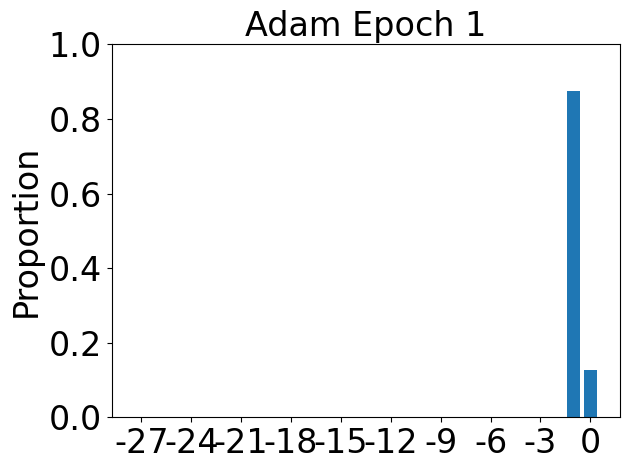}
\includegraphics[width=\linewidth]{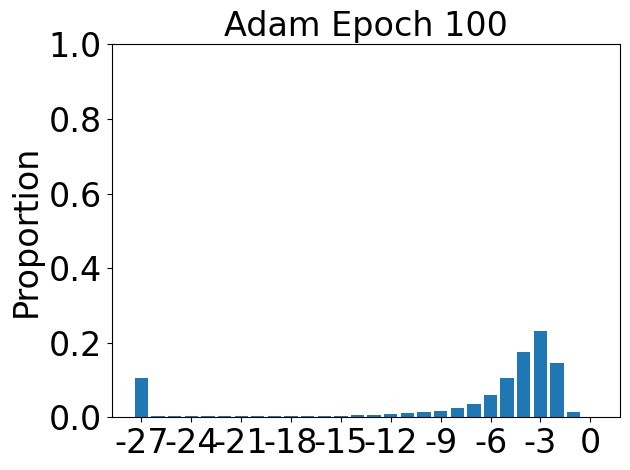}
\includegraphics[width=\linewidth]{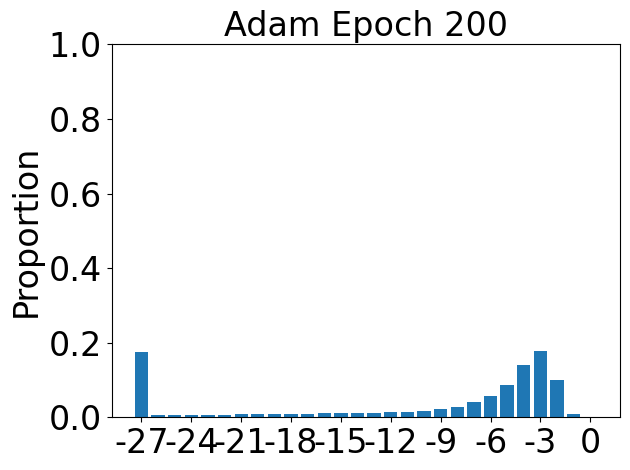}
\includegraphics[width=\linewidth]{figs/histograms_updates/CIFAR100_DenseNet_NoBN/magnitude_histogram_CIFAR100_DenseNet_NoBN_Adam_update_no_alpha_Epoch_299.png}
\caption{Adam-$\ell_2$}
\label{fig:densenet_nobn_adam}
\end{subfigure}
\hspace{0.02\textwidth}
\begin{subfigure}{0.42\textwidth}
\includegraphics[width=\linewidth]{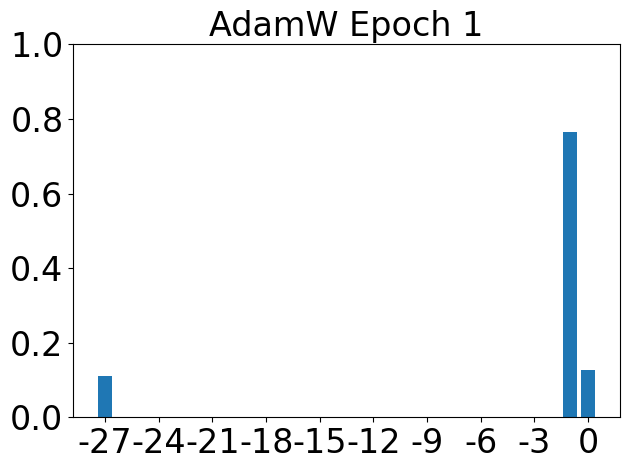}
\includegraphics[width=\linewidth]{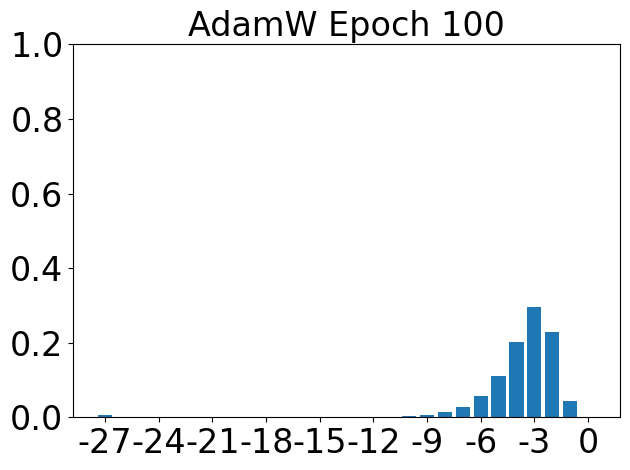}
\includegraphics[width=\linewidth]{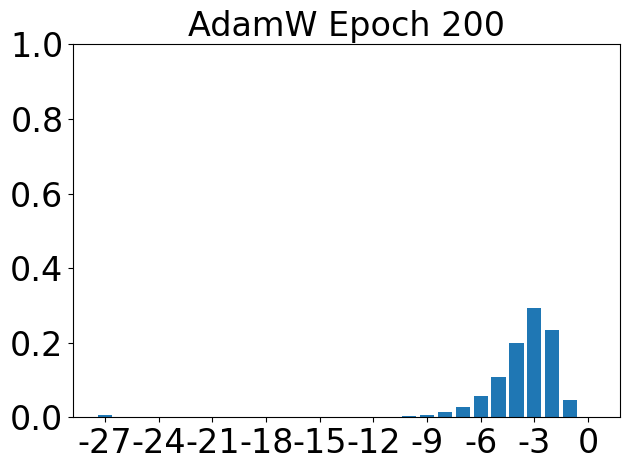}
\includegraphics[width=\linewidth]{figs/histograms_updates/CIFAR100_DenseNet_NoBN/magnitude_histogram_CIFAR100_DenseNet_NoBN_AdamW_update_no_alpha_Epoch_299.png}
\caption{AdamW}
\label{fig:densenet_nobn_adamw}
\end{subfigure}

\caption{The histograms of the magnitudes of all updates (without $\alpha$) of a 100-layer DenseNet-BC with BN removed trained by AdamW or Adam-$\ell_2$ on CIFAR100.}
\label{fig:densenet_nobn_histo}
\end{figure}

\end{document}